
\documentclass{custom} 


\title[Rejection sampling in sublinear time]{Rejection sampling from shape-constrained distributions in sublinear time}

\usepackage{enumitem}
\usepackage{graphicx}
\usepackage{mathtools}
\usepackage{microtype}
\usepackage{times}
\usepackage[Symbolsmallscale]{upgreek}
\usepackage{thm-restate} 

\usepackage{hyperref}       
\usepackage{url}            
\usepackage{booktabs}       
\usepackage{amsfonts}       
\usepackage{nicefrac}       
\usepackage{microtype}      

\usepackage{algorithm}
\usepackage[noend]{algpseudocode}
\usepackage{amsmath, amssymb}
\usepackage{bbm}
\usepackage{mathrsfs}
\usepackage{mathtools}
\usepackage{ragged2e}
\usepackage{thmtools}
\usepackage{thm-restate}
\usepackage{tikz}
\usepackage{wrapfig}
\usepackage{style}
\usepackage{euscript}

\DeclareMathOperator{\unif}{unif}

\declaretheorem[name=Task, style=definition]{task}

\makeatletter
\def\set@curr@file#1{\def\@curr@file{#1}} 
\makeatother
\usepackage[load-configurations=version-1]{siunitx} 

\coltauthor{\Name{Sinho Chewi} \Email{schewi@mit.edu}\\
\Name{Patrik Gerber} \Email{prgerber@mit.edu}\\
 \Name{Chen Lu} \Email{chenl819@mit.edu}\\
 \Name{Thibaut {Le Gouic}} \Email{tlegouic@mit.edu}\\
 \Name{Philippe Rigollet} \Email{rigollet@mit.edu}\\
 \addr MIT}

\begin{document}

\maketitle

\begin{abstract}
    We consider the task of generating exact samples from a target distribution, known up to normalization, over a finite alphabet.
    The classical algorithm for this task is rejection sampling, and although it has been used in practice for decades, there is surprisingly little study of its fundamental limitations.
    In this work, we study the query complexity of rejection sampling in a minimax framework for various classes of discrete distributions. Our results provide new algorithms for sampling whose complexity scales sublinearly with the alphabet size.
    When applied to adversarial bandits, we show that a slight modification of the \EXPthree\ algorithm reduces the per-iteration complexity from $\mc O(K)$ to $\mc O(\log^2 K)$, where $K$ is the number of arms.
\end{abstract}

\section{Introduction}

Efficiently generating exact samples from a given target distribution, known up to normalization, has been a fundamental problem since the early days of algorithm design~\citep{marsaglia1963generating, walker1974new, kronmal1979alias, bratley2011guide, knuth2014art}. It is a basic building block of randomized algorithms and simulation, and understanding its theoretical limits is of intellectual and practical merit. Formally, let $p$ be a probability distribution on the set of integers $[N]:=\{1, \ldots, N\}$, and assume we are given query access to $\tilde p := Z p$, with an unknown constant $Z$. The Alias algorithm \citep{walker1974new} takes $\mathcal{O}(N)$ preprocessing time, after which one can repeatedly sample from $p$ in constant expected time. More recently, sophisticated algorithms have been devised to allow time-varying $\tilde p$ \citep{hagerup1993optimal, matias2003dynamic}, which also require $\mathcal{O}(N)$ preprocessing time. Unsurprisingly, for arbitrary $\tilde{p}$, the $\mc O(N)$ time is the best one can hope for, as shown in \cite{bringmann2017efficient} via a reduction to searching arrays.

A common element of the aforementioned algorithms is the powerful idea of rejection. Rejection sampling, along with Monte Carlo simulation and importance sampling, can be traced back to the work of Stan Ulam and John von Neumann \citep{vonNeumann1951, Ulam}. As the name suggests, rejection sampling is an algorithm which proposes candidate samples, which are then accepted with a probability carefully chosen to ensure that accepted samples have distribution $p$. Despite the fundamental importance of rejection sampling in the applied sciences, there is surprisingly little work exploring its theoretical limits. In this work, we adapt the minimax perspective, which has become a staple of the modern optimization~\citep{nesterov2018lectures} and statistics~\citep{tsybakov2009nonparametric} literature, and we seek to characterize the number of queries needed to obtain rejection sampling algorithms with constant acceptance probability (e.g.\ at least $1/2$), uniformly over natural classes of target distributions.

We consider various classes of shape-constrained discrete distributions that exploit the ordering of the set $[N]$ (monotone, strictly unimodal, discrete log-concave). We also consider a class of distributions on the complete binary tree of size $N$, were only a partial ordering of the alphabet is required. For each of these classes, we show that the rejection sampling complexity scales sublinearly in the alphabet size $N$, which can be compared with the literature on sublinear algorithms~\citep{Gol10, Gol17}. This body of work is largely focused on statistical questions such as estimation or testing and the present paper extends it in another statistical direction, namely sampling from a distribution known only up to normalizing constant, which is a standard step of Bayesian inference.

To illustrate the practicality of our methods, we present an application to adversarial bandits \citep{bubeckcesabianchi2012bandits} and describe a variant of the classical \EXPthree\ algorithm whose iteration complexity scales as $\mathcal{O}(\log^2(K))$ where $K$ is the number of arms.

\section{Background on rejection sampling complexity}\label{scn:rej_sampling}

\subsection{Classical setting with exact density queries}

To illustrate the idea of rejection sampling, we first consider the classical setting where we can make queries to the \emph{exact} target distribution $p$. 
Given a \emph{proposal distribution} $q$ and an upper bound $M$ on the ratio $\max_{x\in [N]} p(x)/q(x)$, rejection sampling proceeds by
drawing a sample $X \sim q$ and a uniform random variable $U \sim \unif(0,1)$. If $U \leq p(X)/(Mq(X))$, the sample $X$ is returned; otherwise, the whole process is repeated. Note that the rejection step is equivalent to flipping a biased coin: conditionally on $X$, the sample $X$ is accepted with probability $p(X)/(Mq(X))$ and rejected otherwise. We refer to this procedure as \emph{rejection sampling with acceptance probability $p/(Mq)$}.

It is easy to check that the output of this algorithm is indeed distributed according to $p$. Since $Mq$ forms an upper bound on $p$, the region $G_q = \{(x, y)\,:\,x\in [N],\,y \in [0,Mq(x)]\}$ is a superset of $G_p = \{(x,y)\,:\,x\in [N],\,y \in [0,p(x)]\}$. Then, a uniformly random point from $G_q$ conditioned on lying in $G_p$ is in turn uniform on $G_p$, and so its $x$-coordinate has distribution $p$. A good rejection sampling scheme hinges on the design of a good proposal $q$ that leads to few rejections.

If $q=p$, then the first sample $X$ is accepted. More generally, 
the number of iterations required before a variable is accepted follows a geometric distribution with parameter $1/M$ (and thus has expectation $M$). In other words, the bound $M$ characterizes the quality of the rejection sampling proposal $q$, and the task of designing an efficient rejection sampling algorithm is equivalent to determining a strategy for building the proposal $q$ which guarantees a small value of the ratio $M$ using few queries.

\subsection{Density queries up to normalization}\label{scn:density_up_to_normalization}

In this paper, we instead work in the setting where we can only query the target distribution up to normalization, which is natural for Bayesian statistics, randomized algorithms, and online learning. Formally, let $\mc P$ be a class of probability distributions over a finite alphabet $\ms X$, and consider a target distribution $p\in \mc P$. We assume that the algorithm $\mc A$ has access to an oracle which, given $x \in \ms X$, outputs the value $Zp(x)$, where $Z$ is an unknown constant. The value of $Z$ does not change between queries. Equivalently, we can think of the oracle as returning the value $p(x)/p(x_0)$, where $x_0 \in \ms X$ is a fixed point with $p(x_0) > 0$. 

To implement rejection sampling in this query model, the algorithm must construct an \emph{upper envelope} for $\tilde p$, i.e., a function $\tilde q$ satisfying $\tilde q \ge \tilde p$. We can then normalize $\tilde q$ to obtain a probability distribution $q$. To draw new samples from $p$, we first draw samples $X \sim q$, which are then accepted with probability $\tilde p(X)/\tilde q(X)$. The following theorem shows that the well-known guarantees for rejection sampling also extend to our query model. The proof is provided in Appendix~\ref{scn:proof_of_rej_sampling}.

\begin{theorem}\label{thm:rejection_sampling}
    Suppose we have query access to the unnormalized target $\tilde{p} = p Z_p$ supported on $\ms X$, and that we have an upper envelope $\tilde q \ge \tilde p$.
    Let $q$ denote the corresponding normalized probability distribution, and let $Z_q$ denote the normalizing constant, i.e., $\tilde q = q Z_q$.
    Then, rejection sampling with acceptance probability $\tilde{p}/\tilde q$ outputs a point distributed according to $p$, and the number of samples drawn from $q$ until a sample is accepted follows a geometric distribution with mean $Z_q/Z_p$.
\end{theorem}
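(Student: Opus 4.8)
The plan is to reduce both claims to a single elementary computation: the probability that a given round of the algorithm results in acceptance, and the joint probability that it both accepts and returns a particular point $x$. First I would record that the acceptance rule is well defined, i.e.\ that $\tilde p(x)/\tilde q(x) \in [0,1]$ for every $x$ in the support of $q$. This is immediate from the envelope hypothesis $\tilde q \ge \tilde p \ge 0$, which also guarantees $\tilde q(x) > 0$ whenever $\tilde p(x) > 0$, so no division by zero occurs on the relevant set (points with $\tilde q(x) = 0$ satisfy $q(x) = 0$ and are never proposed).

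The central calculation is the per-round acceptance probability. Writing $q(x) = \tilde q(x)/Z_q$ and using that a proposed $X = x$ is accepted with probability $\tilde p(x)/\tilde q(x)$, I would compute
\[
\Pr[\text{accept}] = \sum_{x \in \ms X} q(x)\,\frac{\tilde p(x)}{\tilde q(x)} = \frac{1}{Z_q}\sum_{x \in \ms X} \tilde p(x) = \frac{Z_p}{Z_q},
\]
where the last step uses $\sum_x \tilde p(x) = Z_p \sum_x p(x) = Z_p$. The same bookkeeping gives the joint probability of accepting at a particular point, namely $\Pr[X = x,\ \text{accept}] = q(x)\,\tilde p(x)/\tilde q(x) = \tilde p(x)/Z_q = Z_p\,p(x)/Z_q$.

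From here both conclusions follow. For the output distribution, I would divide the joint probability by the acceptance probability (Bayes' rule):
\[
\Pr[X = x \mid \text{accept}] = \frac{Z_p\,p(x)/Z_q}{Z_p/Z_q} = p(x),
\]
so the returned point is distributed exactly according to $p$. For the number of proposals, I would invoke that the algorithm draws a fresh pair $(X, U)$ independently in each round and repeats upon rejection; hence the rounds form i.i.d.\ Bernoulli trials, each succeeding with probability $Z_p/Z_q$, and the number of trials up to and including the first success is geometric with that parameter, and therefore has mean $Z_q/Z_p$.

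There is no serious obstacle here -- the result is the standard rejection-sampling identity -- and the only points requiring care are bookkeeping ones: ensuring the acceptance ratio is a genuine probability (handled by the envelope property), correctly substituting $q = \tilde q/Z_q$ so that the unknown normalizing constant $Z_q$ cancels against $Z_p$, and justifying the independence of successive rounds so that the geometric law applies. The conceptual content is simply that working with the unnormalized pair $(\tilde p, \tilde q)$ in place of $(p, q)$ changes nothing, because the ratio $\tilde p/\tilde q$ equals $(Z_p/Z_q)\,(p/q)$ and the implementation only ever needs this ratio.
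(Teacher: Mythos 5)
Your proof is correct and follows essentially the same route as the paper's: both hinge on the computation $\Pr[\text{accept}] = \sum_x q(x)\,\tilde p(x)/\tilde q(x) = Z_p/Z_q$, with the output law then obtained from the joint probability of proposing $x$ and accepting. The only cosmetic difference is that the paper makes the conditioning explicit by partitioning over the number of rejections and summing the resulting geometric series, whereas you appeal directly to Bayes' rule together with the i.i.d.\ structure of the rounds; these are equivalent.
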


After $n$ queries to the oracle for $p$ (up to normalization), the output $\mc A(n, \tilde p)$ of the algorithm is an upper envelope $\tilde q \ge \tilde p$, and in light of the above theorem it is natural to define the \emph{ratio}
\begin{align*}
    r(\mc A, n, \tilde p)
    &:= \frac{Z_q}{Z_p}
    = \frac{\sum_{x\in \ms X} \tilde q(x)}{\sum_{x\in \ms X} \tilde p(x)} \,.
\end{align*}
The ratio achieved by the algorithm determines the expected number of queries to $\tilde p$ needed to generate each new additional sample from $p$.

As discussed in the introduction, our goal when designing a rejection sampling algorithm is to minimize this ratio uniformly over the choice of target $p\in \mc P$.
We therefore define the rejection sampling complexity of the class $\mc P$ as follows.

\begin{definition}\label{defn:complexity}
   For a class of distributions $\mc P$, \emph{the rejection sampling complexity of $\mc P$} is the minimum number $n\in \N$ of queries needed, such that there exists and algorithm $\mc A$ that satisfies
   \begin{align*}
       \sup_{\tilde p \in \tilde{\mc P}} r(\mc A, n,\tilde p) \le 2\,,
   \end{align*}
   where $\tilde{\mc P} := \{ \tilde p= Zp\,:\, Z >0\}$ is the set of all positive rescalings of distributions in $\mc P$.
\end{definition}
The constant $2$ in Definition~\ref{defn:complexity} is arbitrary and could be replaced by any number strictly greater than $1$, but we fix this choice at $2$ for simplicity.
With this choice of constant, and once the upper envelope is constructed, new samples from the target can be generated with a constant ($\leq 2$) expected number of queries per sample.

Note that when the alphabet $\ms X$ is finite and of size $N$, then $N$ is a trivial upper bound for the complexity of $\mc P$, simply by querying all of the values of $\tilde p$ and then returning the exact upper envelope $\mc A(N, \tilde p) = \tilde p$. Therefore, for the discrete setting, our interest lies in exhibiting natural classes of distributions whose complexity scales \emph{sublinearly} in $N$.

In this work, we specifically focus on \emph{deterministic} algorithms $\mc A$. In fact, we believe that adding internal randomness to the algorithm does not significantly reduce the query complexity.
Using Yao's minimax principle~\citep{yao1977minimax}, it seems likely that our lower bounds can extended to hold for randomized algorithms.
We leave this extension for future work.

\section{Results for shape-constrained discrete distributions}

In order to improve on the trivial rate of $\mc O(N)$ on an alphabet of size $N$, we need to assume some structure of the target distributions. A well-known set of structural assumptions are shape constraints \citep{groeneboom2014nonparametric, silvapulle2011constrained}, which have been extensively studied in the setting of estimation and inference. When the alphabet is $[N]$, shape constraints are built on top of the linear ordering of the support. We show that such assumptions indeed significantly reduce the complexity of the restricted classes of distributions to sublinear rates. We also consider the setting where the linear ordering of the support is relaxed to a partial ordering, and show it also results in sublinear complexity

Our complexity results for various classes of discrete distributions are summarized in~\autoref{tab:discrete}. We define the various classes below, and give the sublinear complexity algorithms that construct the upper envelopes in Figure~\ref{fig:algs}.

\begin{table}
  \caption{Rejection sampling complexities for classes of discrete distributions. Here, $N$ always denotes the alphabet size, $\ms X = \{1,\dotsc,N\}$.}
  \label{tab:discrete}
  \centering
  \begin{tabular}{ccccc}
    \toprule
    Class     & Definition     & Complexity & Theorem & Algorithm \\
    \midrule
    monotone & \autoref{defn:monotone}  & $\Theta(\log N)$  & \autoref{thm:monotone} & Algorithm~\ref{algo:monotone_upper}  \\
    strictly unimodal     & \autoref{defn:unimodal} & $\Theta(\log N)$  & \autoref{thm:unimodal} & Algorithm~\ref{alg:unimodal} \\
    cliff-like     & \autoref{defn:cliff}  & $\Theta(\log \log N)$ & \autoref{thm:cliff} & Algorithm~\ref{alg:log_concave} \\
    discrete log-concave     & \autoref{defn:lc}  & $\Theta(\log \log N)$ & \autoref{thm:discrete_lc} & Algorithm~\ref{alg:log_concave}\\
    monotone on a binary tree & \autoref{defn:tree} & $\Theta(N/\log N)$ & \autoref{thm:tree} & Algorithm~\ref{alg:tree} \\
    \bottomrule
  \end{tabular}
\end{table}

\begin{figure}
  \begin{algorithm}[H]
  \caption{Construct upper envelope for monotone distributions on $[N]$}\label{algo:monotone_upper}
  \begin{algorithmic}[1]
  \State Query the values $\tilde p(2^i)$, $0 \le i\le \lceil \log_2 N \rceil - 1$.
  \State Construct the upper envelope $\tilde q$ as follows: set $\tilde q(1) := \tilde p(1)$, and
  \begin{align*}
      \tilde q(x) := \tilde p(2^i) \, , \qquad \text{ for } x\in (2^i, 2^{i+1}]\,.
  \end{align*}
  \end{algorithmic}
\end{algorithm}
  \begin{algorithm}[H]
  \caption{Construct upper envelope for strictly unimodal distributions on $[N]$}\label{alg:unimodal}
  \begin{algorithmic}[1]
  \State Use binary search to find the mode of $\tilde p$.
  \State Use Algorithm~\ref{algo:monotone_upper} to construct an upper envelope on each side of the mode.
  \end{algorithmic}
\end{algorithm}
  \begin{algorithm}[H]
  \caption{Construct upper envelope for discrete log-concave distributions on $[N]$}\label{alg:log_concave}
  \begin{algorithmic}[1]
  \State Use binary search to find the first index $1 \le i \le \lceil \log_2 N \rceil$ such that $\tilde p(2^i) \le \tilde p(1)/2$, or else determine that $i$ does not exist.
  \State If $i$ does not exist, output the constant upper envelope $\tilde q \equiv \tilde p(1)$.
  \State Otherwise, output
  \begin{align*}
      \tilde q(x)
      := \begin{dcases} \tilde p(1)\,, & x < 2^{i}\,, \\ \tilde p(2^i) \exp\bigl[- \frac{\log(\tilde p(1)/\tilde p(2^i))}{2^i - 1} \, (x - 2^i) \bigr]\,, & x \ge 2^i\,. \end{dcases}
  \end{align*}
  \end{algorithmic}
\end{algorithm}
  \begin{algorithm}[H]
  \caption{Construct upper envelope for monotone distributions on binary trees of size $[N]$}\label{alg:tree}
  \begin{algorithmic}[1]
  \State Query $\tilde p(x)$ for all vertices $x$ which are at depth at most $\ell_0 := \ell - \lfloor \log_2\ell \rfloor + 1$, where $\ell$ is the maximum depth of the tree.
  \State Output
\begin{align*}
    \tilde q(x)
    &:= \begin{cases} \tilde p(x)\,, & \text{if}~\on{depth}(x) \le \ell_0\,, \\ \tilde p(y)\,, & \text{if}~\on{depth}(x) > \ell_0\,,\; \on{depth}(y) = \ell_0\,,\; \text{and}~x~\text{is a descendant of}~y\,. \end{cases}
\end{align*}
  \end{algorithmic}
\end{algorithm}
\caption{Algorithms for constructing rejection sampling upper envelopes which attain the minimax rates described in Table~\ref{tab:discrete}.}
\label{fig:algs}
\end{figure}

\subsection{Structured distributions on a linearly ordered set}

A natural class of discrete distributions which exploits the linear ordering of the set $[N]$ is the class of monotone distributions, defined below.

\begin{definition}\label{defn:monotone}
    The class of \emph{monotone} distributions on $[N]$ is the class of probability distributions $p$ on $[N]$ with $p(1) \ge p(2) \ge p(3) \ge \cdots \ge p(N)$.
\end{definition}



We show in~\autoref{thm:monotone} that the rejection sampling complexity of the class of monotone distributions is $\Theta(\log N)$, achieved via Algorithm~\ref{algo:monotone_upper}. It is also straightforward to extend Algorithm~\ref{algo:monotone_upper} to handle the class of strictly unimodal distributions defined next (see~\autoref{thm:unimodal} and Algorithm~\ref{alg:unimodal}).

\begin{definition}\label{defn:unimodal}
    The class of \emph{strictly unimodal} 
    distributions on $[N]$ is the class of probability distributions $p$ on $[N]$ such that: there exists a point $x \in [N]$ with $p(1) < p(2) < \cdots < p(x)$ and $p(x) > p(x+1) > \cdots > p(N)$.
\end{definition}

It is natural to ask whether further structural properties can yield even faster algorithms for sampling. This is indeed the case, and we start by illustrating this on a simple toy class of distributions.

\begin{definition}\label{defn:cliff}
    The class of \emph{cliff-like} distributions on $[N]$ is the class of probability distributions $\unif([N_0])$ for $N_0 \in [N]$.
\end{definition}

Since the class of cliff-like distributions is contained in the class of monotone distributions, Algorithm~\ref{algo:monotone_upper} yields a simple upper bound of $\mc O(\log N)$ for this class. However, we can do better by observing that in order to construct a good rejection sampling upper envelope for this class, we do not need to locate the index $N_0$ of the cliff exactly; it suffices to find it approximately, which in this context means finding an index $N_0'$ such that $N_0' \le N_0 \le 2N_0'$. Since we only need to search over $\mc O(\log N)$ possible values for $N_0'$, binary search can accomplish this using only $\mc O(\log \log N)$ queries. We prove in~\autoref{thm:cliff} that this rate is tight.

\begin{remark}
The class of cliff-like distributions provides a simple example of a class for which obtaining queries to the exact distribution is \emph{not} equivalent to obtaining queries for the distribution up to a normalizing constant.
Indeed, in the former model, the value of $p(1) = 1/N_0$ reveals the distribution in one query, implying a complexity of $\Theta(1)$, whereas we prove in~\autoref{thm:cliff} that the complexity under the second model is $\Theta(\log \log N)$.
\end{remark}

Instead of formally describing the algorithm for sampling from cliff-like distributions, we generalize the algorithm to cover a larger class of structured distributions: the class of \emph{discrete log-concave distributions} \cite[see \S 4]{saumard2014log}.


\begin{definition}\label{defn:lc}
    The class of \emph{discrete log-concave} distributions on $[N]$ is the class of probability distributions $p$ on $[N]$ such that for all $x \in \{2,\dotsc,N-1\}$, we have ${p(x)}^2 \ge p(x-1) p(x+1)$. Equivalently it is the class of distributions $p$ on $[N]$ for which there exists a convex function $V : \R\to\R \cup \{\infty\}$ such that $p(x) = \exp(-V(x))$ for all $x \in [N]$. In addition, we assume that the common mode of all of the distributions is at $1$.\footnote{Without this condition, the class of discrete log-concave distributions includes the family of all Dirac measures on $[N]$, and the rejection sampling complexity is then trivially $\Theta(N)$.}
\end{definition}

We prove in~\autoref{thm:discrete_lc} that the rejection sampling complexity of discrete log-concave distributions is $\Theta(\log \log N)$, achieved by Algorithm~\ref{alg:log_concave} (note that this algorithm also applies for cliff-like distributions, since cliff-like distributions are discrete log-concave).

\begin{remark}
The class of discrete log-concave distributions is another case for which rejection sampling with exact density queries is much easier than with queries up to a normalizing constant. In the former model, \cite{devroye1987simple} requires only a single query to construct a rejection sampling upper envelope with ratio $\le 5$. In contrast, we show in~\autoref{thm:discrete_lc} that the complexity under the second model is $\Theta(\log\log N)$.
\end{remark}

\subsection{Monotone on a binary tree}

The previous examples of structured classes all rely on the linear ordering of $[N]$. We now show that it is possible to develop sublinear algorithms when the linear ordering is relaxed to a partial ordering.
Specifically, we consider a structured class of distributions on balanced binary trees (note that the previously considered distributions can be viewed as distributions on a path graph).

\begin{definition}\label{defn:tree}
   The class of \emph{monotone distributions on a binary tree} with $N$ vertices is the class of probability distributions $p$ on a binary tree with $N$ vertices, with maximum depth $\lceil \log_2(N+1)\rceil$, such that for every non-leaf vertex $x$ with children $x_1$ and $x_2$, one has $p(x) \ge p(x_1) + p(x_2)$.
\end{definition}

We prove in~\autoref{thm:tree} that the rejection sampling complexity of this class is $\Theta(N/\log N)$; the corresponding algorithm is given as Algorithm~\ref{alg:tree}.

In a sense, \autoref{defn:tree} reduces to the class of monotone distributions when the underlying graph is a path, since each vertex in the (rooted) path graph has one ``child''.The reader may wonder whether replacing the condition $p(x) \ge p(x_1) + p(x_2)$ with $p(x) \ge p(x_1) \vee p(x_2)$ is more natural. In~\autoref{thm:alternate_monotone}, we show that rejection sampling cannot achieve sublinear complexity under the latter definition.

\section{Application to bandits}

Rejection sampling does not just provide us with a method for sampling from a target distribution; it provides us with the stronger guarantee of an upper envelope $\tilde q \ge \tilde p$, with a bound on the ratio of the normalizing constants of $\tilde q$ and $\tilde p$ (see Section~\ref{scn:density_up_to_normalization}). In this section, we show how this stronger property can be used to provide a faster, approximate implementation of the anytime variant of the \EXPthree\ algorithm. We expect that rejection sampling can yield similar computational speedups while retaining performance guarantees for other randomized algorithms.

Recall the adversarial bandit problem~\cite[Ch.\ 3]{bubeckcesabianchi2012bandits}: given $K$ arms, at each step $t \in [T]$ the player chooses an arm $I_t \in [K]$ to play. Simultaneously, an adversary chooses a loss vector $\ell_t \in [0,1]^K$. The chosen arm is then played, and the player incurs a loss of $\ell_t(I_t)$. The aim of the player is to find a strategy that minimizes the pseudo-regret, defined by
\begin{equation*}
    \overline{R}_n = \E \sum\limits_{t=1}^T \ell_t(I_t) - \min\limits_{k \in [K]} \E \sum\limits_{t=1}^T \ell_t(k)\,.
\end{equation*}
See Algorithm \ref{algo:EXP3} for the strategy known as  \EXPthree, which achieves a pseudo-regret of at most $2\sqrt{TK\log K}$ \cite[Theorem 3.1]{bubeckcesabianchi2012bandits}, which is minimax optimal up to the factor of $\sqrt{\log K}$~\cite[Theorem 3.4]{bubeckcesabianchi2012bandits}. In what follows $x(i)$ denotes the $i$'th coordinate of a vector $x$, and $e_j$ denotes the $j$'th standard basis vector in $\R^K$.
\begin{wrapfigure}[14]{L}{0.49\textwidth}
\vspace{.2cm}
\begin{minipage}{0.49\textwidth}
\begin{algorithm}[H]
  \caption{The  \EXPthree\ algorithm.}\label{algo:EXP3}
  \begin{algorithmic}[1]
    \Procedure{\EXPthree}{$T$, ${(\eta_t)}_{t=1}^T$}
    \State set $L_0 := 0$ and $p_0 := \unif\{1,\dots,K\}$
    \For{$t = 1,\dotsc, T$}
\State draw and play $I_t \sim p_{t-1}$\label{EXP3:draw}
\State observe loss $\ell_t(I_t)$
\State set $L_t := L_{t-1} + e_{I_t} \ell_t(I_t) / p_{t-1}(I_t)$\label{EXP3:loss update}
\State set $p_t \propto \exp(-\eta_t L_t)$\label{EXP3:prob update}
\EndFor
    \EndProcedure
  \end{algorithmic}
\end{algorithm}
\end{minipage}
\end{wrapfigure}

The computationally intensive steps of the iteration in Algorithm \ref{algo:EXP3} are drawing the sample on line \ref{EXP3:draw} and updating the distribution on line \ref{EXP3:prob update}. For each $t$, let us write $\tilde{p}_t = \exp(-\eta_t L_t)$ for the unnormalized version of $p_t$. Note that $\tilde{p}_t$ is fully determined by $L_t$ and $\eta_t$. Thus, if we can sample from $\tilde{p}_{t-1}$ on line \ref{EXP3:draw} in $o(K)$ time, then we can improve the na\"{\i}ve iteration complexity of $\Theta(K)$ since we can just skip line \ref{EXP3:prob update}. We achieve this by constructing a specialised data structure $\mathcal{D}$ that maintains the empirical loss vector $L$ in sorted order, thereby allowing fast sampling via Algorithm~\ref{algo:monotone_upper}. We record the requirements on $\mathcal{D}$ in the lemma below. 
\begin{lemma}\label{lem:data structure}
There exists a data structure $\mathcal{D}$ that stores a length-$K$ array $L$ and supports the following operations in $\mathcal{O}(\log K)$ worst-case time:
\begin{enumerate}
\item Given $(L[i], i)$ and a number $\ell$, set $L[i] = \ell$. \label{data structure op2}
\item Given $k \in [K]$, output the $k$-th largest element of the array $(L[k],k)_{k \in [K]}$ (in the dictionary order). \label{data structure op3}
\end{enumerate}
\end{lemma}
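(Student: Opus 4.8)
The plan is to exhibit an explicit data structure meeting the two requirements, and I would build it on top of a balanced binary search tree (say a red-black tree or an AVL tree) whose keys are the pairs $(L[k], k)$ ordered lexicographically in the dictionary order. Storing the pairs rather than the raw values $L[k]$ serves two purposes: it breaks ties deterministically, so that the multiset of stored keys is actually a set and the ``$k$-th largest'' element in operation~\ref{data structure op3} is unambiguous; and it lets us recover the original coordinate index $k$ once we have located a node. The content of the proof is then to verify that each of the two operations can be supported in $\mathcal{O}(\log K)$ worst-case time, which reduces to standard facts about augmented balanced BSTs.

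The key steps, in order, are as follows. First I would describe the tree: each node holds a key $(L[k],k)$, and the tree maintains the balance invariant that guarantees height $\mathcal{O}(\log K)$ on $K$ stored elements. To support operation~\ref{data structure op3}, I augment every node with the size of the subtree rooted at it; this is the classical order-statistic tree augmentation, and subtree sizes can be updated in $\mathcal{O}(1)$ per node along any insertion/deletion path, hence without affecting the $\mathcal{O}(\log K)$ cost of the rotations. With subtree sizes in hand, selecting the $k$-th largest element is a root-to-leaf descent that at each node compares $k$ against the size of the appropriate child subtree and branches accordingly, taking time proportional to the height, i.e.\ $\mathcal{O}(\log K)$. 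Next, for operation~\ref{data structure op2}, updating $L[i]$ to a new value $\ell$: since we are handed both the index $i$ and its current value $L[i]$, we know the exact old key $(L[i],i)$, so we locate and delete that node in $\mathcal{O}(\log K)$ time and then insert the new key $(\ell,i)$, again in $\mathcal{O}(\log K)$ time; each of delete and insert triggers only $\mathcal{O}(\log K)$ comparisons and $\mathcal{O}(1)$ rotations, with subtree-size fields repaired on the way. I would remark that the structure is initialized once in $\mathcal{O}(K\log K)$ (or $\mathcal{O}(K)$ by bulk-building a balanced tree from the sorted array), which is absorbed into preprocessing and does not affect the per-operation bounds claimed in the statement.

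The main subtlety — and the step I would be most careful about — is the interface assumption in operation~\ref{data structure op2} that the caller supplies the pair $(L[i],i)$ rather than just the index $i$. This is exactly what lets the update avoid a separate $\mathcal{O}(\log K)$ lookup-by-index: a plain BST keyed on $(L[k],k)$ does not support efficient access by coordinate $i$ alone, since $i$ is only the secondary sort key. I would either take the phrasing of the lemma at face value (the current value is assumed known to the caller, as is natural in the \EXPthree\ loop where $L_{t-1}(I_t)$ is read immediately before being updated), or, to be self-contained, maintain an auxiliary length-$K$ array of pointers mapping each coordinate $i$ to its node in the tree, updated in $\mathcal{O}(1)$ on every insertion/deletion. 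Either way the verification is routine once the augmentation and the lexicographic keying are fixed; the load-bearing observations are simply that the tree stays balanced and that the order-statistic and dictionary-order-delete/insert operations each touch only a root-to-leaf path.
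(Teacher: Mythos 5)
Your proposal is correct and matches the paper's own proof: both use a self-balancing binary search tree keyed on the pairs $(L[k],k)$ in dictionary order, augmented with subtree sizes (an order-statistic tree), implementing updates as a delete-then-insert and the $k$-th largest query as a rank descent. Your write-up merely fills in the standard details that the paper cites as well known.
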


For a proof of~\autoref{lem:data structure}, see Section~\ref{sec:data structure proof}. Let us now describe a minor modification of the  \EXPthree\ algorithm which has (virtually) identical performance guarantees with per-iteration complexity $\mathcal{O}(\log^2 K)$. First, instead of sampling from $p_{t-1}$ directly, we do so using the rejection sampling proposal $q_{t-1}$ constructed from $\tilde p_{t-1}$ via our algorithm for monotone distributions (Algorithm~\ref{algo:monotone_upper}); this is possible because~\autoref{lem:data structure} gives us query access to the sorted version of $L_t$. Second, we modify the unbiased estimator of the loss in line~\ref{EXP3:loss update} accordingly.

The new unbiased estimator of the loss is defined as follows.
Draw another independent arm $J \sim q_t$ and replace line~\ref{EXP3:loss update} with
\begin{equation*}
    L_t := L_{t-1} + \frac{e_{I_t} \ell_t(I_t)}{\tilde{p}_{t-1}(I_t)} \, \frac{\tilde{p}_{t-1}(J)}{q_{t-1}(J)}\,.
\end{equation*}
Observe that if
$\mathcal{F}_t$ denotes the $\sigma$-algebra generated by all rounds up to time $t-1$ as well as the randomness of the adversary in step $t$, then
\begin{align*}
    \E\Bigl[ \frac{e_{I_t} \ell_t(I_t)}{\tilde{p}_{t-1}(I_t)} \, \frac{\tilde{p}_{t-1}(J)}{q_{t-1}(J)} \Bigm\vert \mathcal{F}_t\Bigr]
    &= \E\Bigl[ \frac{e_{I_t} \ell_t(I_t)}{p_{t-1}(I_t)} \Bigm\vert \mathcal{F}_t\Bigr] \E\Bigl[ \frac{p_{t-1}(J)}{q_{t-1}(J)} \Bigm\vert \mathcal{F}_t\Bigr]
    = \E\Bigl[ \frac{e_{I_t} \ell_t(I_t)}{p_{t-1}(I_t)} \Bigm\vert \mathcal{F}_t\Bigr]
    = \ell_t\,.
\end{align*}
The modified algorithm is given as Algorithm~\ref{algo:EXP3 mod}.

\begin{algorithm}[H]
  \caption{The modified version of the  \EXPthree\ algorithm.}\label{algo:EXP3 mod}
  \begin{algorithmic}[1]
    \Procedure{Fast-\EXPthree}{$T$, ${(\eta_t)}_{t=0}^{T-1}$}
    \State set $L_0 := 0$ and $p_0 := \unif\{1,\dots,K\}$
    \For{$t = 1,\dotsc, T$}
    \State build rejection sampling proposal $q_{t-1}$ of $\tilde{p}_{t-1}\label{EXP3 mod:envelope} :=\exp(-\eta_{t-1}L_{t-1})$
    \State draw and play $I_t \sim p_{t-1}$ via rejection sampling using $q_{t-1}$\label{EXP3 mod:sample}
    \State draw $J \sim q_{t-1}$ independently\label{EXP3 mod: sample2}
\State observe loss $\ell_t(I_t)$
\State set $L_t := L_{t-1} + e_{I_t} \ell_t(I_t) \tilde{p}_{t-1}(J)/ (\tilde{p}_{t-1}(I_t) q_{t-1}(J))$\;\label{EXP3 mod:loss update}
\EndFor
    \EndProcedure
  \end{algorithmic}
\end{algorithm}

Let us verify the claimed $\mc O(\log^2 K)$ per-iteration complexity of Algorithm~\ref{algo:EXP3 mod}. Let $\mathcal{D}$ be an instance of the data structure described in~\autoref{lem:data structure} and initialize it with $L = L_0$. Building the rejection envelope $q_{t-1}$ on line \ref{EXP3 mod:envelope} requires $\mc O(\log K)$ calls to operation \ref{data structure op3}, for a total complexity of $\mathcal{O}(\log^2 K)$. Sampling from $q_{t-1}$ on line \ref{EXP3 mod:sample} requires $\mathcal{O}(\log K)$ time and performing a rejection step takes one call to operation \ref{data structure op3}, so the expected complexity of this step is $\mathcal{O}(\log K)$. Drawing $J$ on line \ref{EXP3 mod: sample2} requires $\mc O(\log K)$ time and one call to operation \ref{data structure op3}, and finally, line \ref{EXP3 mod:loss update} requires one call to operation \ref{data structure op2}.

The following result (proven in Appendix~\ref{scn:proof_regret}) provides a pseudo-regret guarantee.

\begin{proposition}\label{prop:EXP3 mod performance}
Algorithm \ref{algo:EXP3 mod} with step size $\eta_t := \frac 12 \sqrt{\frac{\log K}{K(t+1)}}$ satisfies
\begin{equation*}
    \overline{R}_n \leq 4\sqrt{TK\log K}\,. 
\end{equation*}
\end{proposition}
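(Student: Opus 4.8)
The plan is to follow the standard regret analysis of \EXPthree\ as in \cite[Theorem 3.1]{bubeckcesabianchi2012bandits}, tracking where the modified loss estimator forces changes in the constants. The key observation is that Algorithm~\ref{algo:EXP3 mod} is genuinely running the exponential weights update on an unbiased estimator $\hat\ell_t := e_{I_t}\,\ell_t(I_t)\,\tilde p_{t-1}(J)/(\tilde p_{t-1}(I_t)\,q_{t-1}(J))$, and we have already verified in the excerpt that $\E[\hat\ell_t \mid \mathcal F_t] = \ell_t$. Thus the sampling distribution $p_{t-1}$ and the realized play are exactly as in vanilla \EXPthree; only the variance of the loss estimates differs because of the extra factor $\tilde p_{t-1}(J)/q_{t-1}(J)$.

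First I would set up the potential-based analysis. Writing $\hat L_t = \sum_{s\le t}\hat\ell_s$, the exponential-weights lemma bounds $\sum_t \langle p_{t-1}, \hat\ell_t\rangle - \min_k \hat L_T(k)$ by a term of order $(\log K)/\eta$ plus a second-order term $\sum_t \eta_t \langle p_{t-1}, \hat\ell_t^2\rangle$ (using the standard inequality $\exp(-x)\le 1-x+x^2/2$ valid for $x\ge 0$, which applies since $\hat\ell_t\ge 0$). Taking expectations, the first-order terms reproduce the true pseudo-regret because the estimator is unbiased. The crux is therefore to control $\E\langle p_{t-1}, \hat\ell_t^2\rangle$.

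The main obstacle, and the place where the constant degrades from $2$ to $4$, is bounding this second-moment term. Conditioning on $\mathcal F_t$ and using independence of $I_t\sim p_{t-1}$ and $J\sim q_{t-1}$, I would compute
\begin{align*}
    \E\bigl[\langle p_{t-1}, \hat\ell_t^2\rangle \bigm\vert \mathcal F_t\bigr]
    &= \sum_{k} p_{t-1}(k)\,\frac{\ell_t(k)^2}{p_{t-1}(k)^2}\,p_{t-1}(k)\;\E\Bigl[\frac{\tilde p_{t-1}(J)^2}{q_{t-1}(J)^2}\Bigm\vert\mathcal F_t\Bigr] \cdot \frac{p_{t-1}(I_t)^2}{\tilde p_{t-1}(I_t)^2}\,,
\end{align*}
which I would simplify by noting $\tilde p_{t-1}/p_{t-1} = Z_{p_{t-1}}$ is constant, so the normalization cancels and the $J$-expectation becomes $\E[p_{t-1}(J)/q_{t-1}(J)] = \sum_k q_{t-1}(k)\,p_{t-1}(k)/q_{t-1}(k) = 1$ after accounting for the $Z$ factors; the residual factor is exactly the ratio $r$ of Theorem~\ref{thm:rejection_sampling}, which Algorithm~\ref{algo:monotone_upper} guarantees is $\le 2$ since $p_{t-1}$ is monotone (its weights are decreasing in the sorted loss order). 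This replaces the usual bound $\E\langle p_{t-1},\hat\ell_t^2\rangle \le K$ by $\le 2K$, the single source of the extra factor.

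Finally I would assemble the pieces: choosing $\eta_t = \tfrac12\sqrt{(\log K)/(K(t+1))}$ and summing, the $(\log K)/\eta_T$ term and the $\sum_t \eta_t \cdot 2K$ term each contribute order $\sqrt{TK\log K}$, and optimizing the constant yields $\overline R_n \le 4\sqrt{TK\log K}$. The bookkeeping with the time-varying step size (using that $\eta_t$ is nonincreasing so the standard telescoping/summation-by-parts argument goes through, and that $\sum_{t=1}^T 1/\sqrt{t+1} \le 2\sqrt{T}$) is routine; the only genuinely new ingredient is the variance bound above, where the rejection-sampling ratio guarantee enters and costs exactly a factor of $2$.
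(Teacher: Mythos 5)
Your proposal is correct and follows essentially the same route as the paper: the standard exponential-weights potential argument with nonincreasing step sizes, unbiasedness of the modified estimator for the first-order term, and the rejection-sampling guarantee $\lVert p_t/q_t\rVert_\infty \le 2$ to control the second-moment term. The only (harmless) difference is that you bound $\E[(p_{t-1}(J)/q_{t-1}(J))^2]$ by $\lVert p_{t-1}/q_{t-1}\rVert_\infty \le 2$ in expectation, where the paper uses the cruder pointwise bound $(p_t(J)/q_t(J))^2\le 4$; both land comfortably within the claimed $4\sqrt{TK\log K}$.
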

We regard the above result as a proof of concept for the use of rejection sampling to more efficiently implement subroutines in randomized algorithms.
Our proof of~\autoref{prop:EXP3 mod performance} follows well-known arguments from the bandit literature, and the key new ingredient is our strong control of the ratio between the target and proposal distribution, which allows us to bound the variance of our unbiased estimator of the loss.

\begin{remark}\label{rem:exp3 reducing variance}
In Algorithm \ref{algo:EXP3 mod} one may replace $\tilde{p}_{t-1}(J)/q_{t-1}(J)$ by $(1/m) \sum_{i=1}^m \tilde{p}_{t-1}(J_i)/q_{t-1}(J_i)$ for i.i.d.\ $J_i \sim q_{t-1}$ in order to further reduce the variance of the estimator. 
\end{remark}

\begin{figure}
\centering
\begin{minipage}{0.47\textwidth}
  \centering
  \includegraphics[width=\linewidth]{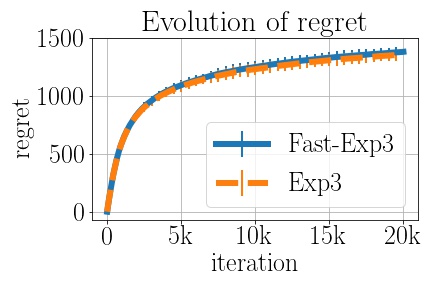}
  \caption{Error bars denote $4$ standard deviations over $20$ runs.}
  \label{fig:changing-cliff regret}
\end{minipage}%
\hfill
\begin{minipage}{0.47\textwidth}
  \centering
  \includegraphics[width=\linewidth]{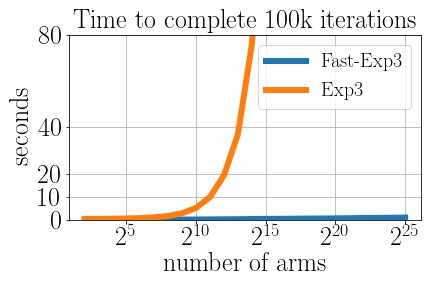}
  \caption{Comparison of iteration speed.}
  \label{fig:EXP3 speed main}
\end{minipage}
\end{figure}

We conduct a small simulation study to confirm that Algorithm \ref{algo:EXP3 mod} is competitive with \EXPthree. In ~\autoref{fig:changing-cliff regret} we plot the regret of the two algorithms over $T=20\text{k}$ steps using the theoretical step size $\eta_t = \sqrt{\log K / (K\,(t+1))}$ and $m=5$ (see~\autoref{rem:exp3 reducing variance}). We run the algorithms on a toy problem with $K=256$ arms, where $10\%$ of the arms always return a loss of $0$, and the remaining arms always return the maximal loss of $1$. In~\autoref{fig:EXP3 speed main} we compare the time it takes for the two algorithms to complete an iteration on the same problem, but with varying number of arms $K$. In the case of \EXPthree,\, when $\log_2 K > 20$ the values are not computed, and for $\log_2 K \leq 20$ they are extrapolated from $100$ iterations, as the running time becomes prohibitive for such large experiments. 

\begin{remark}
We note that for constant step size $\eta_t \equiv \eta$ (e.g.\ when the time horizon $T$ is known in advance), theoretically it is possible to implement \EXPthree\, with constant iteration cost by applying the results of \cite{matias2003dynamic}. Further, using the `doubling trick' such an algorithm can be turned into an anytime algorithm by restarting it $\approx \log T$ times at a computational cost of $\mathcal{O}(K)$ each time. However, our Algorithm \ref{algo:EXP3 mod} is the first to implement the more elegant solution of decaying step size in sublinear time per iteration.
\end{remark}

\section{Conclusion and outlook}

We studied the query complexity of rejection sampling within a minimax framework, and we showed that for various natural classes of discrete distributions, rejection sampling can obtain exact samples with an expected number of queries which is sublinear in the size of the support of the distribution. Our algorithms can also be run in sublinear time, which make them substantially faster than the baseline of multinomial sampling, as shown in our application to the  \EXPthree\ algorithm.

A natural direction for future work is to investigate the complexity of rejection sampling on other structured classes of distributions, such as distributions on graphs, or distributions on continuous spaces. In many of these other settings, the complexity of algorithms based on Markov chains has been studied extensively, but the complexity of rejection sampling remains to be understood.

\section*{Acknowledgments}

Sinho Chewi was supported by the Department of Defense (DoD) through the National Defense Science \& Engineering Graduate Fellowship (NDSEG) Program. Thibaut Le Gouic was supported by NSF award IIS-1838071. Philippe Rigollet was supported by NSF awards IIS-1838071, DMS-1712596,  and DMS-2022448.

\bibliography{ref.bib}

\appendix

\section{Guarantee for rejection sampling}\label{scn:proof_of_rej_sampling}

\begin{proof}[Proof of Theorem~\ref{thm:rejection_sampling}]
Since $\tilde q$ is an upper envelope for $\tilde p$, then $\tilde p(X)/\tilde q(X) \le 1$ is a valid acceptance probability.
Clearly, the number of rejections follows a geometric distribution. The probability of accepting a sample is given by
\begin{align*}
    \Pr(\text{accept})
    &= \int_{\ms X} \frac{\tilde p(x)}{\tilde q(x)} \, q(\D x)
    = \frac{Z_p}{Z_q} \int_{\ms X} p(\D x)
    = \frac{Z_p}{Z_q}\,.
\end{align*}
Let $X_1, X_2, X_3\dots$ be a sequence of i.i.d. samples from $q$ and let $U_1, U_2, U_3\dots$ be i.i.d. $\unif[0,1]$. Let $A \subseteq \ms X$ be a measurable set, and let $X$ be the output of the rejection sampling algorithm. Partitioning by the number of rejections, we may write
\begin{align*}
    \Pr(X \in A) &= \sum\limits_{n=0}^\infty \Pr\Bigl(X_{n+1} \in A,\; U_i > \frac{\tilde{p}(X_i)}{\tilde{q}(X_i)}\, \forall\,i\in [n],\; U_{n+1} \leq \frac{\tilde{p}(X_{n+1})}{\tilde{q}(X_{n+1})}\Bigr) \\
    &= \sum\limits_{n=0}^\infty \Pr\Bigl(X_{n+1} \in A,\, U_{n+1} \leq \frac{\tilde{p}(X_{n+1})}{\tilde{q}(X_{n+1})}\Bigr) \, \Pr\Bigl(U_1 > \frac{\tilde{p}(X_1)}{\tilde{q}(X_1)}\Bigr)^n \\
    &= \sum\limits_{n=0}^\infty \Bigl(\int_A \frac{\tilde{p}(x)}{\tilde{q}(x)} \, q(\D x)\Bigr) \Bigl(\int_{\ms X} \bigl(1 - \frac{\tilde{p}(x)}{ \tilde{q}(x)}\bigr) \, q(\D x)\Bigr)^n \\
    &= p(A) \, \frac{Z_p}{Z_q} \sum\limits_{n=0}^\infty \bigl(1 - \frac{Z_p}{Z_q}\bigr)^n
    = p(A)\,.
\end{align*}
\end{proof}

\section{Details for the bandit application}

\subsection{Pseudo-regret guarantee}\label{scn:proof_regret}

The proof below follows standard arguments in the bandit literature, e.g.~\cite[Theorem 3.1]{bubeckcesabianchi2012bandits}.
\begin{proof}[Proof of~\autoref{prop:EXP3 mod performance}]

    For $\eta > 0$, define the potential
\begin{equation*}
    \Phi_t(\eta) = \frac 1\eta \log \frac 1K \sum\limits_{i=1}^K \exp\bigl(-\eta L_t(i)\bigr)\,.
\end{equation*}
It is not difficult to verify that $\Phi_t'(\eta) \geq 0$ (see e.g. \cite[Proof of Theorem 3.1]{bubeckcesabianchi2012bandits}). Note additionally that $\Phi_0 \equiv 0$ and
\begin{equation*}
    \Phi_T(\eta) \geq -\min\limits_{i^\star \in [K]} L_T(i^\star) - \frac{\log K}{\eta}\,. 
\end{equation*}
For convenience, let $\eta_{-1} = \eta_0$. We get the chain of inequalities
\begin{align}
    \min\limits_{i^\star \in [K]} L_T(i^\star) + \frac{\log K}{\eta_{T-1}} &\geq \Phi_0(\eta_{-1}) - \Phi_T(\eta_{T-1}) \nonumber\\
    &= \sum\limits_{t=0}^{T-1} \left\{\Phi_t(\eta_{t-1}) - \Phi_{t+1}(\eta_t)\right\} \nonumber\\
    &\geq \sum\limits_{t=0}^{T-1} \left\{\Phi_t(\eta_t) - \Phi_{t+1}(\eta_t)\right\}, \label{eqn:regret telescope}
\end{align}
where the last inequality uses that $\eta_t \leq \eta_{t-1}$. The change in the potential from step $t$ to $t+1$ is
\begin{align*}
    \Phi_t(\eta_t) - \Phi_{t+1}(\eta_t) &= -\frac{1}{\eta_t} \log\frac{\sum_{i=1}^K \exp(-\eta_t L_{t+1}(i))}{\sum_{i=1}^K \exp(-\eta_t L_t(i))} \\
    &= -\frac{1}{\eta_t} \log\frac{\sum_{i=1}^K \exp(-\eta_t L_t(i)-\eta_t \one\{I_{t+1}=i\} \,\frac{\ell_{t+1}(i)}{\tilde{p}_t(i)}\,\frac{\tilde{p}_t(J)}{q_t(J)})}{\sum_{i=1}^K \exp(-\eta_t L_t(i))} \\
    &= -\frac{1}{\eta_t} \log \E_{i \sim p_t} \exp\Bigl(-\eta_t \one\{I_{t+1}=i\}\,\frac{\ell_{t+1}(i)}{\tilde{p}_t(i)} \, \frac{\tilde{p}_t(J)}{q_t(J)} \Bigr)\,.
\end{align*}
Using now that $e^{-x} \leq 1 - x + x^2/2$ for all $x \geq 0$ we write
\begin{align*}
    &\Phi_t(\eta_t) - \Phi_{t+1}(\eta_t) \\
    &\qquad \geq -\frac{1}{\eta_t} \log \E_{i \sim p_t}\Bigl[1 -\eta_t \one_{\{I_{t+1}=i\}}\, \frac{ \ell_{t+1}(i)}{\tilde{p}_t(i)}\,\frac{\tilde{p}_t(J)}{q_t(J)} + \frac{\eta_t^2}{2}  \one_{\{I_{t+1}=i\}} \, \Bigl(\frac{ \ell_{t+1}(i)}{\tilde{p}_t(i)} \,\frac{\tilde{p}_t(J)}{q_t(J)}\Bigr)^2\Bigr]\,. \\
    \intertext{Since $\log(1-x) \leq -x$, we further have}
    &\qquad \geq \sum\limits_{i=1}^K p_t(i) \one_{\{I_{t+1}=i\}} \,\frac{\ell_{t+1}(i)}{\tilde{p}_t(i)} \,\frac{\tilde{p}_t(J)}{q_t(J)} - \frac{\eta_t}{2}\sum\limits_{i=1}^K p_t(i) \one_{\{I_{t+1}=i\}} \,\Bigl( \frac{\ell_{t+1}(i)}{\tilde{p}_t(i)}\, \frac{\tilde{p}_t(J)}{q_t(J)}\Bigr)^2\,.
\end{align*}
Now, we take the expectation on both sides to obtain
\begin{equation*}
    \E[\Phi_t(\eta_t) - \Phi_{t+1}(\eta_t)] \geq \E\langle p_t, \ell_{t+1}\rangle - \frac{\eta_t}{2} \sum\limits_{i=1}^K \E\Bigl[p_t(i)^2\, \Bigl(\frac{\ell_t(i)}{p_t(i)}\, \frac{p_t(J)}{q_t(J)}\Bigr)^2\Bigr]\,,
\end{equation*}
where we used that $\tilde{p}_t(J)/\tilde{p}_t(i) = p_t(J)/p_t(i)$. The rejection sampling guarantee ensures that $\norm{p_t/q_t}_\infty \leq 2$ (see~\eqref{eq:easier_task}). This implies
\begin{align*}
    \E[\Phi_t(\eta_t) - \Phi_{t+1}(\eta_t)] &\geq \E\langle p_t, \ell_{t+1}\rangle - 2\eta_t \sum\limits_{i=1}^K \E[\ell_t(i)^2]
    \ge \E\langle p_t,\ell_{t+1}\rangle - 2\eta_t K\,.
\end{align*}
Plugging this bound into \eqref{eqn:regret telescope} we obtain
\begin{align*}
    \min\limits_{i^\star \in [K]}L_t(i^\star) + \frac{\log K}{\eta_{T-1}} &\geq \E \sum\limits_{t=0}^{T-1} \langle p_t, \ell_{t+1} \rangle - 2K\sum\limits_{t=0}^{T-1} \eta_t\,. 
\end{align*}
Rearranging yields the pseudo-regret guarantee
\begin{align*}
    \max\limits_{i^\star \in [K]} \E\Bigl[ \sum\limits_{t=1}^T \ell_t(I_t) - \sum\limits_{t=1}^T \ell_t(i^\star)\Bigr] \le \frac{\log K}{\eta_{T-1}} + 2 K \sum\limits_{t=0}^{T-1}\eta_t\,.
\end{align*}
Setting $\eta_t = \frac 12 \sqrt{\frac{\log K}{K(t+1)}}$ yields the bound $4\sqrt{TK\log K}$. 
\end{proof}

\subsection{The data structure} \label{sec:data structure proof}

\begin{proof}[Proof of~\autoref{lem:data structure}]
The data structure $\mathcal{D}$ is a self-balancing binary search tree with $K$ nodes, each of which contains the size of its left subtree as extra information. It is well-known that implementations of such a structure exist which support $\mathcal{O}(\log K)$ worst-case deletion, insertion, update, and search. In addition, it also supports finding the $k$-th largest element in $\mc O(\log K)$ time thanks to the extra information about the sizes of the subtrees. The data structure $\mathcal{D}$ is used to maintain the array $(L[k],k)_{k \in [K]}$ in sorted order (sorted according to the dictionary order). 
\end{proof}

\subsection{Experiments}
In addition to the experiments in the main text, we compare the performance of \EXPthree\, and Algorithm \ref{algo:EXP3 mod} on $2$ additional problems. Once again we run for $T=20\text{k}$ steps on toy problems with $K=256$ arms, using the stepsize $\eta_t = \sqrt{\log K / (K(t+1))}$ and $m=5$. The first problem is illustrated in ~\autoref{fig:EXP3 changing-cliff}, where a fixed fraction $20\%$ of the arms always returns $0$ and the rest return a loss of $1$. Moreover, the arms that return favorable loss changes throughout the running time, as the reader may observe from the $5$ `bumps' in the cumulative loss. In ~\autoref{fig:EXP3 stochastic} we simulate a `stochastic' setting, where to each arm a distribution is associated, and every pull of that arm returns an i.i.d. copy from that distribution. In our experiment arm $k \in \{0,1,\dots,K-1\}$ has distribution $\sim (k/K-0.3 U) \lor 0$ where $U \sim \unif(0,1)$. In particular, arm $0$ always returns $0$. 

In all our experiments, we implemented the data structure defined in ~\autoref{lem:data structure} using the SortedList class of the sortedcontainers Python library \cite{sortedcontainers}.

\begin{figure}[H]
\centering
\begin{minipage}{0.47\textwidth}
  \centering
  \includegraphics[width=\linewidth]{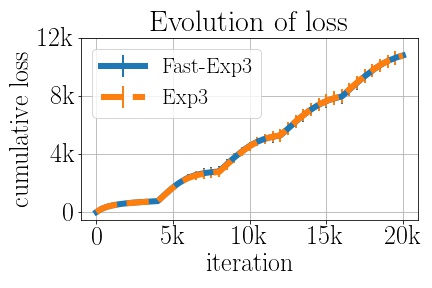}
  \caption{Error bars denote $4$ standard deviations over $20$ runs.}
  \label{fig:EXP3 changing-cliff}
\end{minipage}%
\hfill
\begin{minipage}{0.47\textwidth}
  \centering
  \includegraphics[width=\linewidth]{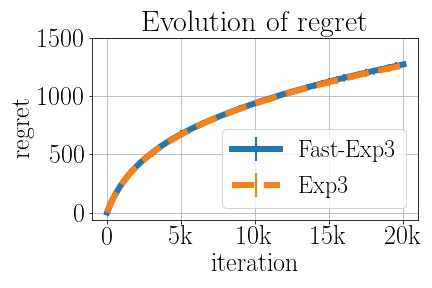}
  \caption{Error bars denote $4$ standard deviations over $20$ runs. }
  \label{fig:EXP3 stochastic}
\end{minipage}
\end{figure}

\section{Proofs of the complexity bounds}\label{scn:proof_overview}



We begin with a few general comments on the lower bounds.
Recall that the rejection sampling task, given query access to the unnormalized distribution $\tilde p$, is to construct an upper envelope $\tilde q \ge \tilde p$ satisfying $Z_q \le 2Z_p$.
We in fact prove lower bounds for an easier task, namely, the task of constructing a proposal distribution $q$ such that $\norm{p/q}_\infty \le 2$. Note that if we have an upper envelope $\tilde q \ge \tilde p$ with $Z_q \le 2Z_p$, then the corresponding normalized distribution $q$ satisfies
\begin{align}\label{eq:easier_task}
    \bigl\lVert \frac{p}{q} \bigr\rVert_\infty
    &:= \sup_{x\in \ms X} \frac{p(x)}{q(x)}
    = \sup_{x\in \ms X} \underbrace{\frac{\tilde p(x)}{\tilde q(x)}}_{\le 1} \underbrace{\frac{Z_q}{Z_p}}_{\le 2}
    \le 2\,,
\end{align}
so the latter task is indeed easier.

The proofs of the lower bounds are to an extent situational, but we outline here a fairly generic strategy that seems useful for many (but not all) classes of distributions. First, we fix a reference distribution $p^\star \in \mc P$ and assume that the algorithm has access to queries to an oracle for $p^\star$ (up to normalization). Also, suppose that the algorithm makes queries at the points $x_1,\dotsc,x_n$.
Since we assume that the algorithm is deterministic, if $p \in \mc P$ is another distribution which agrees with $p^\star$ at the queries $x_1,\dotsc,x_n$ (up to normalization), then the algorithm produces the same output regardless of whether it is run on $p$ or $p^\star$. In particular, the output $q$ of the algorithm must satisfy both $\norm{p/q}_\infty \le 2$ and $\norm{p^\star/q}_\infty \le 2$.

More generally, for each $y \in [N]$ we can construct an adversarial perturbation $p_y \in \mc P$ of $p^\star$ which maximizes the probability of $y$, subject to being consistent with the queried values.
Then the rejection sampling guarantee of the algorithm ensures that
\begin{align*}
    2
    &\ge \bigl\lVert \frac{p_y}{q} \bigr\rVert_\infty
    \ge \frac{p_y(y)}{q(y)}
    = \frac{1}{q(y)} \sup\Bigl\{p(y) : p \in \mc P, \; \frac{p(x_i)}{p(x_j)} = \frac{p^\star(x_i)}{p^\star(x_j)}~\text{for all}~i,j \in [n]\Bigr\}\,.
\end{align*}
Since $q$ is a probability distribution, this yields the inequality
\begin{align}\label{eq:lower_bd_ineq}
    1
    &= \sum_{y\in [N]} q(y)
    \ge \frac{1}{2} \sum_{y\in [N]}\sup\Bigl\{p(y) : p \in \mc P, \; \frac{p(x_i)}{p(x_j)} = \frac{p^\star(x_i)}{p^\star(x_j)}~\text{for all}~i,j \in [n]\Bigr\}\,.
\end{align}
By analyzing this inequality for the various classes of interest, it is seen to furnish a lower bound on the number of queries $n$. Thus, the lower bound strategy consists of choosing a judicious reference distribution $p^\star$, constructing the adversarial perturbations $p_y$, and using the inequality~\eqref{eq:lower_bd_ineq} to produce a lower bound on $n$.

\subsection{Monotone distributions}\label{scn:monotone_pf}

\begin{theorem}\label{thm:monotone}
Let $\mc P$ be the class of monotone distributions supported on $[N]$, as given in Definition~\ref{defn:monotone}. Then the rejection sampling complexity of $\mc P$ is $\Theta(\log N)$.
\end{theorem}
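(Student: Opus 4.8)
The statement asserts a two-sided bound, so the plan is to prove the upper bound $O(\log N)$ by analyzing Algorithm~\ref{algo:monotone_upper}, and the matching lower bound $\Omega(\log N)$ through the generic strategy built around~\eqref{eq:lower_bd_ineq}.

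For the upper bound, Algorithm~\ref{algo:monotone_upper} makes $\lceil\log_2 N\rceil$ queries, so it suffices to verify that it outputs a valid envelope with ratio at most $2$. That $\tilde q\ge\tilde p$ is immediate from monotonicity: for $x\in(2^i,2^{i+1}]$ we have $x\ge 2^i$, hence $\tilde p(x)\le\tilde p(2^i)=\tilde q(x)$. For the ratio I would group both normalizing constants by dyadic blocks. Writing $B_i=(2^i,2^{i+1}]$ (which contains $2^i$ points), one has $Z_q=\tilde p(1)+\sum_{i\ge0}2^i\,\tilde p(2^i)$, while monotonicity gives $\sum_{x\in B_i}\tilde p(x)\ge 2^i\,\tilde p(2^{i+1})$, so that $Z_p\ge\tilde p(1)+\sum_{i\ge0}2^i\,\tilde p(2^{i+1})=\tilde p(1)+\tfrac12\sum_{j\ge1}2^j\,\tilde p(2^j)$. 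Comparing the two expressions (a one-index shift of the dyadic sum) yields $Z_q\le 2Z_p$, which is exactly the required ratio.

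For the lower bound I would apply the generic strategy with the harmonic reference $p^\star(x)\propto 1/x$, which places equal mass on every dyadic scale and should be the extremal monotone instance. Fix the query locations $x_1<\cdots<x_n$ (adjoining $1$ and $N$ for free) and write $R_j=x_{j+1}/x_j$ for the multiplicative gaps, so that $\prod_j R_j=N$. For a target $y$ in a gap $(x_j,x_{j+1})$, the consistent monotone perturbation maximizing $p(y)$ matches $p^\star$'s ratios at the queries, uses the minimal step-down profile between queries, and raises a flat plateau at height $p(x_j)$ on $[x_j,y]$. Carrying out the normalization, the maximal value is $P(y):=\sup\{p(y)\}\approx 1/(\sigma x_j+y)$, where $\sigma:=\sum_i(1-1/R_i)$ quantifies the total forced mass. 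Summing over each gap then gives the key estimate $\sum_y P(y)\approx\sum_j\log\frac{\sigma+R_j}{1+\sigma}$.

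By~\eqref{eq:lower_bd_ineq}, any successful algorithm must satisfy $\sum_y P(y)\le 2$, so it remains to show that $\sum_j\log\frac{\sigma+R_j}{1+\sigma}\le 2$ together with $\prod_j R_j=N$ forces $n\ge(1-o(1))\log_2 N$. \textbf{This optimization is the crux of the argument.} The difficulty is the coupling through $\sigma$: naive bounds such as $\frac{\sigma+R_j}{1+\sigma}\ge\frac{R_j}{1+\sigma}$ only yield $\sum_y P(y)\gtrsim\log N-n\log\sigma$, which degrades to the weaker rate $\Omega(\log N/\log\log N)$. To recover the tight constant one must identify the minimizing configuration, which by convexity is the geometric spacing $R_j\equiv N^{1/n}$; there $\sigma\asymp n\,(1-N^{-1/n})$ and the sum evaluates to $\approx N^{1/n}$, so $\sum_y P(y)\le 2$ indeed forces $N^{1/n}\lesssim 2$, i.e.\ $n\ge(1-o(1))\log_2 N$. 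Establishing that geometric spacing is worst-case, and controlling the $o(1)$ terms in the continuum approximation of the per-gap sums, is where the real work lies.
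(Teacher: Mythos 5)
Your upper bound is correct and is essentially the paper's argument verbatim: dyadic queries, the step-function envelope, and the index-shift comparison of $Z_q$ and $Z_p$. No issues there.

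The lower bound has a genuine gap, and you have located it yourself: the optimization you call ``the crux'' is asserted, not proved. Your perturbation takes the true pointwise supremum --- raise a plateau at height $p^\star(x_j)$ on $(x_j,y]$ and renormalize --- and the renormalization couples all the gaps through $\sigma$, leaving you with the constrained problem of minimizing $\sum_j\log\frac{\sigma+R_j}{1+\sigma}$ subject to $\prod_j R_j=N$, where $\sigma$ itself depends on the $R_j$. Your convexity heuristic for why geometric spacing is extremal is not immediate (the objective is not a sum of independent convex functions of $\log R_j$ once $\sigma$ varies with the configuration), and without it you only get the $\Omega(\log N/\log\log N)$ rate you concede. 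So as written the proof does not establish the theorem.

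The paper avoids this entirely by \emph{not} using the exact supremum. Its perturbation $p_y$ replaces $p^\star$ on $(x_i,y]$ by its \emph{average} value $\alpha$ on that interval rather than by the maximal admissible value $p^\star(x_i)$. This is mass-preserving, so $p_y$ is already a probability distribution and no renormalization is needed; consequently the contributions of different gaps to \eqref{eq:lower_bd_ineq} decouple completely. A dyadic summation shows each gap with $\Delta_i=x_{i+1}/x_i\ge 8$ contributes $\gtrsim{(\log\Delta_i)}^2/\log N$, whence $\sum_i{(\log\Delta_i)}^2\lesssim\log N$; combined with $\sum_i\log\Delta_i=\log N$ (up to the small gaps, which contribute at most $n\log 8$) the Cauchy--Schwarz inequality gives $n\gtrsim\log N$ with no extremal-configuration analysis at all. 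If you want to salvage your route, you could either complete the extremal argument rigorously, or simply lower-bound your $P(y)$ by the paper's $\alpha$ (the average is at most the plateau height, and dropping the added plateau mass from the normalizer only helps), which collapses your computation to the decoupled one.
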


\subsubsection{Upper bound}\label{scn:monotone_upper}

In the proof, let $p$ denote the target distribution and assume that we can query the values of $\tilde p = p Z_p$.
Also, by rounding $N$ up to the nearest power of $2$, and considering $p$ to be supported on this larger alphabet, we can assume that $N$ is a power of $2$; this will not affect the complexity bound.

\begin{proof}
We construct the upper envelope $\tilde q$ as follows: first query the values of $\tilde p(2^i)$, $0\le i \le \log_2 N-1$, which requires $\mc O (\log N)$ queries; then $\tilde q$ is given as follows: set $\tilde q(1) := \tilde p(1)$ and
\begin{align*}
    \tilde q(x) := \tilde p(2^i) \, , \qquad \text{ for } x\in (2^i, 2^{i+1}]\,.
\end{align*}
Note that $\tilde q$ is an upper envelope of $\tilde p$ because $p$ is assumed to be monotone.

To complete the proof of the upper bound in Theorem~\ref{thm:monotone}, we just have to check that $Z_q/Z_p \le 2$. We use the definitions of the normalizing constants:
\begin{align*}
    Z_p
    &= \tilde p(1) + \sum_{i=0}^{\log_2 N - 1} \sum_{x=2^i+1}^{2^{i+1}} \tilde p(x)
    \ge \tilde p(1) + \sum_{i=0}^{\log_2 N - 1} 2^i \tilde p(2^{i+1}) \\
    &\ge \tilde p(1) + \frac{1}{2} \sum_{i=0}^{\log_2 N - 2} \sum_{x=2^{i+1}+1}^{2^{i+2}} \tilde q(x)
    = \underbrace{\tilde p(1)}_{= (\tilde q(1)+\tilde q(2))/2} + \frac{1}{2} \sum_{x=3}^N \tilde q(x)
    = \frac{1}{2} \sum_{x=1}^N \tilde q(x)
    = \frac{1}{2} \, Z_q\,.
\end{align*}
The bound above shows that $Z_q/Z_p\le 2$, which concludes the proof.
\end{proof}

\subsubsection{Lower bound}\label{scn:monotone_lower}

In this proof, we follow the lower bound strategy encapsulated in~\eqref{eq:lower_bd_ineq}.

\begin{proof}
Let $x_1< \dotsc < x_n$ denote the queries; to simplify the proof, we will also assume that $1$ and $N$ are part of the queries. This can be interpreted as giving the algorithm two free queries, and the rest of the proof can be understood as a lower bound on the number of queries that the algorithm made, minus two.

We choose our reference distribution to be $p^\star(x) \propto 1/x$, i.e., we take
\begin{align*}
    p^\star(x)
    &= \frac{c_N}{x\log N}\,, \qquad\text{for}~x \in [N]\,,
\end{align*}
where $c_N$ is used to normalize the distribution, and it satisfies $c_N \asymp 1$.
To construct the adversarial perturbation $p_y$, suppose that $y$ lies strictly between the queries $x_i$ and $x_{i+1}$. Let $\alpha := {(y - x_i)}^{-1} \sum_{x_i < x \le y} p^\star(x)$ denote the average of $p^\star$ on $(x_i, y]$.
Then, we define
\begin{align*}
    p_y(x) := \begin{cases}
        \alpha\,, & x_i < x \le y\,, \\
        p^\star(x)\,, & \text{otherwise}\,.
    \end{cases}
\end{align*}
Since we replace the part of $p^\star$ on $(x_i, y]$ with its average value on this interval, then $p_y$ is also a probability distribution:
\begin{align*}
    \sum_{x\in [N]} p_y(x)
    = \sum_{x \in [N]} p^\star(x) + \sum_{x_i < x \le y} \{\alpha - p^\star(x)\}
    = 1\,.
\end{align*}
Since $p^\star$ is decreasing, it is clear that $p_y$ is too.
Also, we can lower bound $\alpha$ via
\begin{align*}
    \alpha
    &= \frac{1}{y-x_i} \sum_{x_i < x \le y} \frac{c_N}{x \log N}
    \ge \frac{c_N}{\log N} \, \frac{\log \frac{y+1}{x_i+1}}{y-x_i}\,.
\end{align*}
Since $p_y$ agrees with the queries, we can substitute this into~\eqref{eq:lower_bd_ineq} to obtain
\begin{align*}
    \frac{2 \log N}{c_N}
    &
    \geq \sum_{i=1}^{n-1} \sum_{x_i < y < x_{i+1}} \frac{\log \frac{y+1}{x_i+1}}{y-x_i}\,.
\end{align*}
In what follows, let $\Delta_i := x_{i+1}/x_i$. We will only focus on the terms with $\Delta_i \ge 8$, so assume now that $\Delta_i \ge 8$.
Let us evaluate the inner term via dyadic summation: 
\begin{align*}
    \sum_{x_i < y < x_{i+1}} \frac{\log \frac{y+1}{x_i+1}}{y-x_i}
    &= \sum_{0 < y < x_{i+1}-x_i} \frac{\log(1+\frac{y}{x_i+1})}{y} \\
    &\geq \sum_{0 \le j \le \log_2\frac{x_{i+1}-x_i-1}{x_i+1}-1} \; \sum\limits_{2^j (x_i+1) \leq y < 2^{j+1} (x_i+1)} \frac{\log(1 + \frac{y}{x_i+1})}{y} \\
    &\gtrsim  \sum_{0 \le j \le \log_2\frac{x_{i+1}-x_i-1}{x_i+1}-1} \; \sum\limits_{2^j (x_i+1) \leq y < 2^{j+1} (x_i+1)} \frac{j}{2^{j+1} \, (x_i+1)} \\
    &\gtrsim  \sum_{0 \le j \le \log_2(\Delta_i/4)} j
    \gtrsim {(\log \Delta_i)}^2\,.
\end{align*}
Let $A := \{i\in [n-1] : \Delta_i \ge 8\}$.
Our calculations above yield
\begin{equation*}
   \log N \gtrsim \sum\limits_{i \in A} {(\log \Delta_i)}^2\,.
\end{equation*}
Observe now that $\prod_{i=1}^{n-1}\Delta_i = N$ and $\prod_{i\in A^\comp} \Delta_i \le 8^{\abs{A^\comp}} \le 8^n$, so that $\prod_{i\in A} \Delta_i \ge N/8^n$. Hence, applying the Cauchy-Schwarz inequality,
\begin{align*}
    \log N
    &\gtrsim \frac{1}{\abs A} \, \Bigl\lvert \sum_{i\in A} \log \Delta_i \Bigr\rvert^2
    \ge \frac{{[{(\log N - n\log 8)}_+]}^2}{\abs A}\,.
\end{align*}
We can now conclude as follows: either $n \ge (\log N)/(2\log 8)$, in which case we are done, or else $n \le (\log N)/(2\log 8)$. In the latter case, the above inequality can be rearranged to yield $n-1 \ge \abs A \gtrsim \log N$, which proves the desired statement in this case as well.
\end{proof}

\subsection{Strictly unimodal distributions}

\begin{theorem}\label{thm:unimodal}
Let $\mc P$ be the class of strictly unimodal distributions supported on $[N]$, as given in Definition~\ref{defn:unimodal}. Then the rejection sampling complexity of $\mc P$ is $\Theta(\log N)$.
\end{theorem}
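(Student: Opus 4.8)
The plan is to reduce both bounds to the monotone case (Theorem~\ref{thm:monotone}), which already handles a single increasing or decreasing run, and to pay only an additional $O(\log N)$ for locating the mode. So the strictly unimodal case should inherit the rate $\Theta(\log N)$ essentially for free, as anticipated by Algorithm~\ref{alg:unimodal}.

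For the upper bound I would first use binary search to find the mode $x^\star$ of $\tilde p$. Strict unimodality is exactly what makes this step clean: at any interior point $m$, a single comparison of $\tilde p(m)$ with $\tilde p(m+1)$ reveals without ambiguity (there are no plateaus) whether $x^\star \le m$ or $x^\star > m$, so binary search locates $x^\star$ in $O(\log N)$ queries. Once $x^\star$ is found, the restriction of $\tilde p$ to $\{1,\dots,x^\star\}$ is increasing and its restriction to $\{x^\star+1,\dots,N\}$ is decreasing. I would then run Algorithm~\ref{algo:monotone_upper} on each piece---on the left piece after reversing the order---to obtain envelopes $\tilde q_L \ge \tilde p$ on $\{1,\dots,x^\star\}$ and $\tilde q_R \ge \tilde p$ on $\{x^\star+1,\dots,N\}$, each satisfying the ratio-$2$ guarantee of Theorem~\ref{thm:monotone} on its own piece. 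Splicing them into one envelope $\tilde q$ supported on the two disjoint pieces, the normalizing constants add, so
\begin{align*}
    Z_q = \sum_{x\le x^\star}\tilde q_L(x) + \sum_{x>x^\star}\tilde q_R(x)
    \le 2\sum_{x\le x^\star}\tilde p(x) + 2\sum_{x>x^\star}\tilde p(x)
    = 2 Z_p\,,
\end{align*}
which gives the claimed ratio, for a total of $O(\log N)$ queries.

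For the lower bound I would observe that every strictly decreasing distribution is strictly unimodal (with mode at $1$), so the class of strictly unimodal distributions essentially contains the hard family from the monotone lower bound (Section~\ref{scn:monotone_lower}). The reference $p^\star(x)\propto 1/x$ is strictly decreasing, hence already strictly unimodal; the only obstruction is that the adversarial perturbations $p_y$ are constant on $(x_i,y]$ and therefore merely weakly monotone. I would repair this by adding an arbitrarily small strictly decreasing tilt to each $p_y$, which keeps it strictly unimodal, preserves (up to negligible error) both its agreement with $p^\star$ at the queried points and the estimate $p_y(y)\gtrsim\alpha$, and hence leaves the dyadic-summation bound of Section~\ref{scn:monotone_lower} unchanged up to constants. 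Feeding these strictly unimodal instances through inequality~\eqref{eq:lower_bd_ineq} then reproduces $n = \Omega(\log N)$.

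I expect the only genuinely delicate point to be the lower bound reduction: one must check that the tilt making the hard instances strictly unimodal can be taken small enough not to disturb the order-of-magnitude estimates while still landing inside the class. The upper bound is bookkeeping on top of Theorem~\ref{thm:monotone}, with the single thing to verify being that strict unimodality makes the binary search for the mode unambiguous.
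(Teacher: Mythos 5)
Your proposal is correct and follows essentially the same route as the paper: binary search for the mode (unambiguous by strictness), Algorithm~\ref{algo:monotone_upper} on each side with the normalizing constants adding to give ratio $2$, and for the lower bound the reuse of the monotone hard instances with an infinitesimal strictly decreasing tilt to place the perturbations $p_y$ inside the class. The paper states both steps more tersely but with the identical content, so there is nothing to flag.
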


\subsubsection{Upper bound}\label{scn:unimodal_upper}

\begin{proof}
    Since the strategy is very similar to the upper bound for the class of monotone distributions (\autoref{thm:monotone}), we briefly outline the procedure here.
    Using binary search, we can locate the mode of the distribution using $\mc O(\log N)$ queries. Once the mode is located, the strategy for constructing an upper envelope for monotone distributions can be employed on each side of the mode.
\end{proof}

\subsubsection{Lower bound}\label{scn:unimodal_lower}

\begin{proof}
    We again refer to the class of monotone distributions (\autoref{thm:monotone}), for which the lower bound is given in~\ref{scn:monotone_lower}. Essentially the same proof goes through for this setting as well, and we make two brief remarks on the modifications. First, the reference distribution $p^\star$ in that proof is also strictly unimodal. Second, although the adversarial perturbations $p_y$ constructed in that proof are not strictly unimodal, they can be made strictly unimodal via infinitesimal perturbations, so it is clear that the proof continues to hold.
\end{proof}

\subsection{Cliff-like distributions}\label{scn:cliff_appendix}
\begin{theorem}\label{thm:cliff}
Let $\mc P$ be the class of cliff-like distributions supported on $[N]$, as given in Definition~\ref{defn:cliff}. Then the rejection sampling complexity of $\mc P$ is $\Theta(\log\log N)$.
\end{theorem}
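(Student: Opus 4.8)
The plan is to prove matching upper and lower bounds of $\Theta(\log\log N)$ for the class of cliff-like distributions, where each $p \in \mc P$ has the form $\unif([N_0])$ for some unknown cutoff $N_0 \in [N]$.

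For the upper bound, the key observation (already sketched in the main text) is that to build a rejection envelope $\tilde q \ge \tilde p$ with $Z_q \le 2 Z_p$, we do not need to locate $N_0$ exactly; it suffices to find a value $N_0'$ with $N_0' \le N_0 \le 2 N_0'$. Since the unnormalized oracle returns $\tilde p(x) = Z$ for $x \le N_0$ and $\tilde p(x) = 0$ for $x > N_0$, the sign of $\tilde p(2^i)$ tells us whether $2^i \le N_0$. Thus I would binary search over the $\mc O(\log N)$ candidate exponents $i \in \{0,1,\dots,\lceil \log_2 N\rceil\}$ to find the largest $i$ with $\tilde p(2^i) > 0$, costing $\mc O(\log\log N)$ queries. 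Writing $N_0' := 2^i$, I then set the envelope to the constant $\tilde q \equiv \tilde p(1)$ on $[2N_0']$ (and $0$ beyond), which dominates $\tilde p$ since $N_0 \le 2N_0'$ and $\tilde p$ is constant on its support. The ratio is then $Z_q/Z_p = 2N_0'/N_0 \le 2$ because $N_0' \le N_0$, giving the claimed bound. (This is a special case of Algorithm~\ref{alg:log_concave}, as noted in the text.)

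For the lower bound, I would apply the generic strategy of~\eqref{eq:lower_bd_ineq}. Take the reference distribution $p^\star = \unif([N])$, so the oracle returns a constant on all of $[N]$; crucially, any cliff-like $p$ with $N_0 \ge \max_j x_j$ agrees with $p^\star$ (up to normalization) on every queried point, since all queried values are positive and equal. The adversarial perturbations are $p_y := \unif([N_y])$ for an appropriately chosen cutoff $N_y$. The subtlety is that the queries $x_1 < \dots < x_n$ do constrain which cutoffs are consistent: a cutoff $N_0$ is consistent with the data iff no queried point lands strictly between the largest queried index below the true support boundary — more precisely, consistency forces $N_0$ to lie in one of the gaps $[x_j, x_{j+1})$ determined by the query pattern, since querying a point returns $0$ exactly when it exceeds $N_0$. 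For each $y$ I would choose $p_y$ to be the uniform distribution whose support ends just past $y$ but still respects the consistency constraint, so that $p_y(y) = 1/N_y$ is as large as possible. Substituting into~\eqref{eq:lower_bd_ineq} yields $2 \ge \sum_y \sup\{p(y)\} \gtrsim \sum_{j} (\text{gap length})/x_{j+1}$ or a comparable sum of logarithmic terms.

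The main obstacle, and the crux of the argument, is converting this inequality into the $\log\log N$ bound. The right way to see it is that with $n$ queries the algorithm partitions $[N]$ into $n+1$ intervals, and for $y$ in the interval $(x_j, x_{j+1}]$ the best consistent cutoff forces $p_y(y) \gtrsim 1/x_{j+1}$, so the contribution of that interval to the sum is of order $(x_{j+1}-x_j)/x_{j+1} = 1 - x_j/x_{j+1}$. Since the product of the ratios $x_{j+1}/x_j$ telescopes to $N$, an extremal argument (balancing the ratios $x_{j+1}/x_j$ to be equal, i.e. $x_{j+1}/x_j = N^{1/(n+1)}$) shows that keeping each term $1 - x_j/x_{j+1}$ small forces each ratio close to $1$, which in turn forces $N^{1/(n+1)}$ close to $1$, i.e. $n \gtrsim \log\log N$. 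I would make this precise by showing that if $n \le c\log\log N$ for small enough $c$ then some ratio $\Delta_j = x_{j+1}/x_j$ must be at least a constant factor $> 1$ (by pigeonhole on the telescoping product $\prod_j \Delta_j = N$), and a single such gap already contributes a constant to the right-hand side; summing over the $\Theta(\log\log N)$ gaps needed to telescope up to $N$ while keeping each contribution below a threshold yields the contradiction with the bound $2$. This balancing/telescoping step is the technical heart, analogous to the Cauchy–Schwarz step in the monotone lower bound but with the doubly-logarithmic scaling coming from the fact that a single multiplicative gap already saturates the constraint.
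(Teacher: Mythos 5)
Your upper bound is correct and is essentially the paper's: binary search over the $\mc O(\log N)$ dyadic scales to localize $N_0$ within a factor of $2$, which is the specialization of Algorithm~\ref{alg:log_concave} to this class. The lower bound, however, has a genuine gap.

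The problem is that for cliff-like distributions the inequality~\eqref{eq:lower_bd_ineq} cannot be summed over all gaps $(x_j, x_{j+1})$ simultaneously. In the monotone proof this works because the perturbations $p_y$ for \emph{every} $y\in[N]$ are consistent with the single reference $p^\star(x)\propto 1/x$, so all gaps contribute to one inequality. Here, a perturbation $\unif([y])$ with $y$ in the gap $(x_j,x_{j+1})$ is consistent with the observed oracle answers only if the true cutoff lies in that same gap; perturbations living in different gaps correspond to incompatible query transcripts. So for any fixed target, \eqref{eq:lower_bd_ineq} yields only $\log(x_+/x_-)\le 2$ for the single \emph{active} gap $[x_-,x_+)$ straddling the cutoff (exactly the conclusion of Appendix~\ref{scn:cliff_lower}) and constrains none of the other gaps. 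Your pigeonhole/telescoping step then has no purchase: if you demand that \emph{all} gaps have ratio $\le e^2$, you are proving an $\Omega(\log N)$ lower bound, which is false and contradicts your own upper bound (an adaptive algorithm leaves most gaps enormous and only shrinks the active one); if you demand it only of the active gap, the identity $\prod_j \Delta_j = N$ gives nothing, because the adversary cannot relocate the cutoff into a large gap without changing the adaptive algorithm's subsequent queries. The assertion that ``$\Theta(\log\log N)$ gaps are needed to telescope up to $N$'' is likewise unsupported --- with each ratio bounded by a constant you would need $\Theta(\log N)$ gaps.

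What is missing is an argument that handles adaptivity. The paper's route: from $x_+\le e^2 x_-$ one concludes that any successful algorithm localizes the cutoff to one of $\Theta(\log N)$ multiplicative scales, i.e.\ it solves Task~\ref{task:finding_cliff} on an array of size $\log_c N$. Since each query has a binary outcome (positive or zero), a depth-$n$ decision tree can distinguish at most $2^n$ scales, forcing $n\ge\Omega(\log\log N)$. Your proposal needs this reduction (or an equivalent count of transcripts); the balancing argument you describe in its place does not produce the doubly-logarithmic rate.
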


\subsubsection{Upper bound}\label{scn:cliff_upper}

\begin{proof}
    Since the class of cliff-like distributions is contained in the class of discrete log-concave distributions, the upper bound for the former class is subsumed by Theorem~\ref{thm:discrete_lc} on the latter class.
\end{proof}

\subsubsection{Lower bound}\label{scn:cliff_lower}

In this proof, we reduce the task of building a rejection sampling proposal $q$ for the class of cliff-like distributions to the computational task of finding the cliff in an array.
Formally, the latter task is defined as follows.

\begin{task}[finding the cliff in an array]\label{task:finding_cliff}
    There is an unknown array of the form \[ a = [1,\dotsc,1,0,\dotsc,0] \] of size $N$.
    Let $k$ be the largest index such that $a[i] = 1$.
    Given query access to the array, what is the minimum number of queries needed to determine the value of $k$?
\end{task}

The number of queries needed to solve~\autoref{task:finding_cliff} is $\Theta(\log N)$ (achieved via binary search).
We now give the reduction.

\begin{proof}
Suppose that the algorithm makes queries to $\tilde p$.
Let $x_-$ be the largest query point with $\tilde p(x_-) > 0$, and let $x_+$ be the smallest query point with $\tilde p(x_+) = 0$.
Given $x_- \le y < x_+$, the adversarial perturbation $p_y$ is the uniform distribution on $[y]$.
Substituting this into~\eqref{eq:lower_bd_ineq}, and replacing ratios between $p^\star$ with ratios between $\tilde p$, we obtain
\begin{align*}
    2
    &\ge \sum_{x_- \le y < x_+} p_y(y)
    = \sum_{x_- \le y < x_+} \frac{1}{y}
    \ge \log\frac{x_+}{x_-}\,.
\end{align*}
Hence, an algorithm which can achieve the desired rejection sampling guarantee can guarantee that $x_+ \le cx_-$, where $c = e^2$ is a constant.

This reduces the lower bound for the rejection sampling complexity to the following question: what is the minimum number of queries to ensure that $x_+ \le cx_-$?

At this point we can reduce to~\autoref{task:finding_cliff}.
Suppose after $n$ queries we can indeed ensure that $x_+ \le cx_-$.
Consider an array $a$ of size $\log_c N$, which has a cliff at index $k$. (We may round $c$ up to the nearest integer, and $N$ up to the nearest multiple of $c$ in order to avoid ceilings and floors.)
From this array we construct the unnormalized distribution $\tilde p$ on $[N]$ via
\begin{align*}
    \tilde p(x) := \one\{x \le c^k\}\,, \qquad x \in [N]\,.
\end{align*}
The rejection sampling algorithm provides us with $x_+ \le cx_-$ such that $\tilde p(x_-) = 1$ and $\tilde p(x_+) = 0$, i.e., $x_- \le c^k < x_+ \le cx_-$.
Taking logarithms, we see that
\begin{align*}
    \log_c x_-
    &\le k
    < \log_c x_- + 1\,.
\end{align*}
Hence, taking $\log_c x_-$ and rounding to the nearest integer (possibly doing a constant number of extra queries to the array afterwards for verification) locates the cliff $k$ in $n$ queries.
Using the lower bound for~\autoref{task:finding_cliff}, we see that $n = \Omega(\log \log N)$ as claimed.
\end{proof}

\subsection{Discrete log-concave distributions}
\begin{theorem}\label{thm:discrete_lc}
    Let $\mc P$ be the class of discrete log-concave distributions on $[N]$, as in Definition~\ref{defn:lc}, and recall that the modes of the distributions are assumed to be $1$. Then the rejection sampling complexity of $\mc P$ is $\Theta(\log\log N)$.
\end{theorem}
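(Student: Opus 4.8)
The plan is to prove the two bounds separately; the lower bound is essentially inherited from a subclass, while the upper bound requires analyzing Algorithm~\ref{alg:log_concave}.

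For the $\Omega(\log\log N)$ lower bound I would not redo any work. Every cliff-like distribution $\unif([N_0])$ is discrete log-concave with mode at $1$: away from the cliff one has $p(x)^2 = p(x-1)\,p(x+1)$, and at the cliff the log-concavity inequality holds trivially because $p(N_0+1) = 0$. Hence the class of Definition~\ref{defn:cliff} sits inside the class of Definition~\ref{defn:lc}. Rejection sampling complexity is monotone under class inclusion --- an algorithm that guarantees $\sup_{\tilde p} r(\mc A, n, \tilde p) \le 2$ over a class does so over every subclass --- so the bound of Theorem~\ref{thm:cliff} forces the complexity of the log-concave class to be $\Omega(\log\log N)$ as well.

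For the upper bound I would analyze Algorithm~\ref{alg:log_concave}. The query count is immediate: a binary search over the $\mc O(\log N)$ dyadic indices for the first $2^i$ at which $\tilde p$ has dropped below $\tilde p(1)/2$ uses only $\mc O(\log\log N)$ queries. I would then verify that the output $\tilde q$ is a genuine upper envelope using the two defining features of the class. Writing $m = 2^i$, on $\{x < m\}$ monotonicity (the mode is at $1$) gives $\tilde p(x) \le \tilde p(1) = \tilde q(x)$; on $\{x \ge m\}$ the exponential branch equals $e^{L(x)}$, where $L$ is the affine interpolant of $\log\tilde p$ through $(1,\log\tilde p(1))$ and $(m,\log\tilde p(m))$, and since $\log\tilde p$ is concave it lies below the extension of this chord beyond the interval $[1,m]$, giving $\tilde p(x)\le e^{L(x)} = \tilde q(x)$. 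The branch where the threshold is never reached is trivial, since then $\tilde p$ stays within a factor $2$ of $\tilde p(1)$ and the flat envelope already has ratio below $2$.

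The crux --- and the step I expect to be the main obstacle --- is controlling the ratio $Z_q/Z_p$ when the pivot $m$ exists. Setting $b = \tilde p(1)$ and $a = \tilde p(m) \le b/2$, the envelope mass splits into a flat part $(m-1)\,b$ and a tail $\sum_{x\ge m} a\,e^{-\rho(x-m)} \le a/(1-e^{-\rho})$ with rate $\rho = \log(b/a)/(m-1) \ge \log 2/(m-1)$. The delicate point is to certify that $Z_p$ is comparable to the flat part: a merely monotone density could collapse to just above $b/2$ immediately and stay there, making $Z_p$ far smaller than $(m-1)b$. Log-concavity rules this out --- concavity of $\log\tilde p$ forbids a ``drop-then-plateau'' profile --- and, combined with the threshold guarantee $\tilde p(2^{i-1}) > b/2$, forces $\tilde p$ to stay above the chord $e^{L}$ throughout $[1,m]$, which lower-bounds the flat-region mass by a geometric sum matching $(m-1)b$ up to a constant; the tail is then handled by the geometric bound above. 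I would carry this out by dyadic/geometric summation in the spirit of the monotone case (Theorem~\ref{thm:monotone}). The one genuinely fussy point is pinning the absolute constant: the two-piece envelope naturally yields a ratio bounded by a constant somewhat above $2$, so to meet the normalization $\le 2$ of Definition~\ref{defn:complexity} I would either tighten the threshold constant or subdivide the exponential branch into $\mc O(1)$ sub-pieces tracking the concave log-density, neither of which disturbs the $\mc O(\log\log N)$ query count.
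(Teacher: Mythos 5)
Your architecture coincides with the paper's: the lower bound is inherited from the cliff-like subclass via class inclusion (exactly the paper's one-line argument, and your verification of the inclusion is correct), and the upper bound analyzes Algorithm~\ref{alg:log_concave} with the same flat-plus-exponential-tail decomposition of $Z_q$. The query count, the envelope validity (monotonicity below the pivot; concavity of $\log\tilde p$ placing it below the extended chord beyond $[1,m]$), and the no-pivot branch are all fine.

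The step you single out as the crux is where your reasoning goes wrong. First, the scenario you worry about is harmless: a monotone density that drops to just above $b/2$ and plateaus on $[1,m)$ has $Z_p\ge (m-1)b/2$, already within a factor $2$ of the flat envelope mass $(m-1)b$, so no ``collapse'' is possible. Second, the mechanism you propose --- lower-bounding $Z_p$ by the sum of the chord $e^{L}$ over $[1,m]$ --- does not deliver the claimed conclusion. With $a=\tilde p(m)$, that geometric sum is of order $(m-1)b/\log(b/a)$, and the pivot only guarantees $a\le b/2$; nothing stops $a$ from being exponentially small in $m$. For instance, take $\tilde p(x)=1$ for $x\le m/2$ and $\tilde p(x)=e^{-\kappa(x-m/2)}$ afterwards with $\kappa$ large: this is log-concave with mode $1$, the pivot returned is $m$, $a=e^{-\kappa m/2}$, and the chord sum is $O(1/\kappa)$, vastly smaller than $(m-1)b$. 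So the chord argument fails exactly when the density decays fast past the pivot. The correct comparison (the paper's) needs neither log-concavity nor the chord: since $m$ is the \emph{first} dyadic point with $\tilde p(m)\le b/2$, monotonicity gives $\tilde p(x)\ge\tilde p(m/2)>b/2$ for all $x\le m/2$, hence $Z_p>mb/4$ and the flat part is at most $4Z_p$; the tail is at most $a/(1-e^{-\rho})\lesssim (m-1)\,a\le (m-1)\,b/2\lesssim Z_p$. In short, log-concavity enters the upper bound only to certify that the extrapolated exponential dominates $\tilde p$, not to control the ratio. Your closing caveat about the absolute constant is fair and is shared by the paper, which proves $Z_q/Z_p\le 4$ and remarks that the argument can be tightened to give $2$.
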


\subsubsection{Upper bound}\label{scn:lc_upper}
We make a few simplifying assumptions just as in the upper bound proof for Theorem~\ref{thm:monotone}. Let $p$ denote the target distribution, assume that the queries are made to $\tilde p = pZ_p$, and let $V:\R \to \R \cup \{\infty\}$ be a convex function such that $\tilde p(x) = \exp( - V(x))$ for $x \in [N]$. Also, we round $N$ up to the nearest power of $2$, which does not change the complexity bound.
\begin{proof}
    First we make one query to obtain the value of $\tilde p(1)$. Then we find the integer $0\le i_0\le \log_2 N - 1$ (if it exists) such that 
    \begin{align*}
        2\tilde p(2^{i_0}) \ge \tilde p(1)\,, \qquad  2\tilde p(2^{i_0+1}) \le \tilde p(1)\, .
    \end{align*}
    To do this, observe that the values $\tilde p(2^i), \; 0 \le i \le \log_2 N$ are decreasing, and by performing binary search over these $\mc O(\log N)$ values we can find the integer $i_0$ or else conclude that it does not exist using $\mc O(\log \log N)$ queries.
    
    If $i_0$ does not exist, then the target satisfies $2 \tilde p(x) \ge \tilde p(1)$ for all $x\in[N]$, so the constant upper envelope $\tilde q= \tilde p(1)$ suffices.
    
    If $i_0$ exists, denote $x_0 = 2^{i_0 + 1}$, and construct the upper envelope $\tilde q$ as follows: query $\tilde p(x_0)$, and let
    \begin{align*}
        \tilde q(x) &= \begin{cases}
        \tilde p(1)\,, & x < x_0\,,\\
        \tilde p(x_0)\, e^{-\lambda \,(x-x_0)}\,, & x\ge x_0\, ,
        \end{cases}\\
        \lambda &= \frac{\log \frac{\tilde p(1)}{\tilde p(x_0)}}{x_0-1} = \frac{V(x_0) - V(1)}{x_0-1}\,.
    \end{align*}
    We check that $\tilde q$ is a valid upper envelope of $\tilde p$. If we take logarithms and denote $V_q(x) = -\log \tilde q(x)$, then we see that
    \begin{align*}
        V_q(x) = \begin{cases}
        V(1)\,, & x < x_0\,,\\
        V(x_0) + \lambda \,(x-x_0)\,, & x \ge x_0\, .
        \end{cases}
    \end{align*}
    Because $V$ is convex, we see that $V_q$ is a lower bound of $V$, so $\tilde q$ is an upper bound of $\tilde p$.
    
    To finish the proof, we just have to bound $Z_q/Z_p$. Let $Z_{q,1} = \sum_{x< x_0} \tilde q(x)$, and $Z_{q,2} = \sum_{x\ge x_0} \tilde q(x)$, so $Z_q = Z_{q,1} + Z_{q,2}$. We will bound these two terms separately. For the first term, by the definition of $x_0$ we can bound
    \begin{align*}
        Z_{q,1}
        = \sum_{x<x_0} \tilde p(1)
        &\le 2 \sum_{x < x_0/2} \tilde p(1)
        \le 4 \sum_{x < x_0/2} \tilde p(x) \,.
    \end{align*}
    For the second term,
    \begin{align*}
        Z_{q,2} &\le \tilde p(x_0) \sum_{z=0}^\infty e^{-\lambda z}
        = \tilde p(x_0) \, {(1 - e^{-\lambda})}^{-1}
        = \tilde p(x_0) \, \Bigl(1 - \bigl(\frac{\tilde p(x_0)}{\tilde p(1)}\bigr)^{x_0-1}\Bigr)^{-1}
        \le 2 \tilde p(x_0)\,.
    \end{align*}
    Putting this together,
    \begin{align*}
        Z_q
        &= Z_{q,1} + Z_{q,2}
        \le 4Z_p\,.
    \end{align*}
    For clarity, we have presented the proof with the bound $Z_q/Z_p \le 4$. At the cost of more cumbersome proof, the above strategy can be modified to yield the guarantee $Z_q/Z_p \le 2$.
\end{proof}

\subsubsection{Lower bound}\label{scn:lc_lower}

\begin{proof}
    Since the class of cliff-like distributions is contained in the class of discrete log-concave distributions, the lower bound for the latter class is subsumed by Theorem~\ref{thm:cliff} on the former class.
\end{proof}

\subsection{Monotone on a binary tree}
\begin{theorem}\label{thm:tree}
Let $\mc P$ be the class of monotone distributions on a binary tree with $N$ vertices, as in \autoref{defn:tree}. Then the rejection sampling complexity of $\mc P$ is $\Theta (N/(\log N))$.
\end{theorem}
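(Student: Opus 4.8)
For the \emph{upper bound}, I would analyze Algorithm~\ref{alg:tree}. The algorithm queries all vertices at depth at most $\ell_0 := \ell - \lfloor \log_2 \ell \rfloor + 1$ and extends each queried leaf-of-the-truncated-tree value downward as a constant envelope. Two things need checking. First, that $\tilde q \ge \tilde p$: for any descendant $x$ of a depth-$\ell_0$ vertex $y$, monotonicity along the tree gives $p(y) \ge p(x)$, since repeatedly applying $p(\text{parent}) \ge p(\text{child}_1) + p(\text{child}_2) \ge p(\text{child})$ telescopes down any root-to-$x$ path. Second, the ratio bound $Z_q/Z_p \le 2$. Here the key structural fact from~\autoref{defn:tree} is that the mass in any subtree rooted at $y$ is at most $p(y)$ times the number of levels below it: more precisely, summing the condition $p(y) \ge p(y_1) + p(y_2)$ level by level shows that the total mass at each fixed depth below $y$ is at most $p(y)$, so a subtree of height $h$ below $y$ carries mass at most $(h+1)\,p(y)$. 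The envelope assigns mass $p(y)$ to each of the (up to $2^{\ell - \ell_0}$) descendants at each level, so I must verify that $Z_q - Z_p$ is controlled. The choice $\ell_0 = \ell - \lfloor \log_2 \ell\rfloor + 1$ makes the height below the cut roughly $\log_2 \ell \approx \log_2 \log_2 N$, and a careful accounting of the over-counted mass should yield the factor $2$. The number of queries is the number of vertices of depth $\le \ell_0$, which is $\Theta(2^{\ell_0}) = \Theta(2^\ell / \ell) = \Theta(N/\log N)$ since $2^\ell \asymp N$ and $\ell \asymp \log N$.

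For the \emph{lower bound}, I would follow the generic strategy of~\eqref{eq:lower_bd_ineq}. Take the reference distribution $p^\star$ to be (proportional to) the distribution placing mass $2^{-\operatorname{depth}(x)}$ on each vertex, which is monotone on the tree since $p^\star(x) = 2^{-d} = 2 \cdot 2^{-(d+1)} = p^\star(x_1) + p^\star(x_2)$ with equality. For any leaf $y$ that is \emph{not} queried, the adversarial perturbation $p_y$ should push as much mass as possible onto $y$ while remaining consistent with the queried ratios and staying in $\mc P$. Because a monotone tree distribution can concentrate mass $\asymp p^\star(z)$ at a single leaf below any unqueried vertex $z$ (by redistributing the subtree mass into one path), the supremum appearing in~\eqref{eq:lower_bd_ineq} for such $y$ is bounded below by a constant times $p^\star(y)$ scaled up by the subtree structure. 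Summing $p_y(y)$ over all $y$ in unqueried subtrees and comparing to $1$ then forces the queries to ``cover'' the tree densely. Concretely, an algorithm making $n$ queries leaves at least one entire subtree of height $\gtrsim \log(N/n)$ untouched somewhere, and the mass that can be adversarially concentrated there is too large unless $n \gtrsim N/\log N$.

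The main obstacle is the lower bound, specifically making precise how unqueried vertices force a large value in~\eqref{eq:lower_bd_ineq}. The difficulty is that a single query at a deep vertex constrains only the ratios along one root-to-vertex path, so I must argue that $n$ queries can pin down the normalization only on a small portion of the tree: the essential counting step is that the queried vertices, together with their ancestors, touch $O(n \log N)$ vertices, so if $n = o(N/\log N)$ then a positive fraction of the $\Theta(N)$ leaves lie in subtrees whose mass is essentially unconstrained, and reallocating that mass adversarially to maximize $\sum_y p_y(y)$ violates the bound of $2$ in~\eqref{eq:lower_bd_ineq}. Getting the logarithmic factor exactly right — rather than an extra $\log$ loss — will require carefully choosing which leaf $y$ to perturb within each free subtree and summing the resulting contributions tightly.
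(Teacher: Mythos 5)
Your overall plan coincides with the paper's: the same Algorithm~\ref{alg:tree} and the same structural inequality for the upper bound, and the generic strategy~\eqref{eq:lower_bd_ineq} with a geometrically decaying reference and path-concentrating perturbations for the lower bound. The upper bound is essentially complete: your observation that the level sums $\sum_{\abs{x}=j}\tilde p(x)$ are nonincreasing (equivalently, that a height-$h$ subtree below $y$ carries mass at most $(h+1)\,p(y)$) is exactly the ingredient the paper uses, since applied at the root it gives $Z_p \ge (\ell_0+1)\sum_{\abs{x}=\ell_0}\tilde p(x)$, while the envelope's excess below the cut is at most $(2^{\ell-\ell_0+1}-2)\sum_{\abs{x}=\ell_0}\tilde p(x) \lesssim \ell\,2^{-c}\sum_{\abs{x}=\ell_0}\tilde p(x)$; the ratio of these two is $O(2^{-c})$, which closes the computation you left as ``a careful accounting.''

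The genuine gap is in the lower bound, at precisely the step you flag. Two points. First, your counting heuristic (``queried vertices together with their ancestors touch $O(n\log N)$ vertices'') is not the mechanism that works: the correct unit of accounting is the subtree rooted at a vertex of depth $\ell_0' := \ell - \log_2\ell + c$, because each query lies below exactly one such vertex, so $n$ queries leave at least $2^{\ell_0'}-n$ of these subtrees entirely unqueried, and only for $y$ inside a fully unqueried subtree is the perturbation $p_y$ guaranteed consistent with all queried values. Relatedly, ``at least one entire untouched subtree of height $\gtrsim \log(N/n)$'' is far too weak — a single untouched subtree only yields an $\Omega(\sqrt N)$ bound — you must sum the excess over \emph{all} untouched subtrees. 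Second, the quantitative choice of the cutoff matters and must differ from the algorithm's: monotonicity caps the perturbed value at any descendant $y$ of an unqueried depth-$\ell_0'$ vertex $z$ by $p^\star(z)\asymp 2^{-\ell_0'}/\ell$, so summing $p_y(y)-p^\star(y)$ over the $\asymp 2^{\ell-\ell_0'}=\ell\,2^{-c}$ descendants of $z$ gives an excess of order $2^{-c}\,2^{-\ell_0'}$ per untouched subtree; plugging into~\eqref{eq:lower_bd_ineq} yields $1 \gtrsim 2^{-c}\,(1 - n/2^{\ell_0'})$, which forces $n \gtrsim 2^{\ell_0'} \asymp N/\log N$ only if $c$ is chosen a sufficiently \emph{negative} constant (with $c=+1$ as in the algorithm, the excess is $O(1)$ and no contradiction results). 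The paper implements exactly this, with the minor simplification of truncating the reference $\tilde p^\star(x)=2^{-\abs{x}}$ at depth $\ell_0'$ and letting $p_y$ add mass $2^{-\ell_0'}$ along the path down to $y$; your untruncated reference works too, but the choice of $c$ and the per-subtree summation are the missing ideas that make the $\log$ factor come out right.
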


Let $\eu T$ denote the binary tree.
For the upper bound, we may embed $\eu T$ into a slightly larger tree, and for the lower bound we can perform the construction on a slightly smaller tree.
In this way, we may assume that $\eu T$ is a complete binary tree of depth $\ell$, and hence $N = \sum_{j=0}^\ell 2^j = 2^{\ell+1}-1$; this does not affect the complexity results. Throughout the proofs, we write $\abs x$ for the depth of the vertex $x$ in the tree, where the root is considered to be at depth $0$.

\subsubsection{Upper bound}\label{scn:tree_upper}

\begin{proof}
Let $c$ be a constant to be chosen later.
The algorithm is to query the value of $\tilde p$ at all vertices at depth at most $\ell_0 := \ell - \log_2 \ell + c$.
Then the upper envelope is constructed as follows,
\begin{align*}
    \tilde q(x)
    &:= \begin{cases} \tilde p(x)\,, & \text{if}~\abs x \le \ell_0\,, \\ \tilde p(y)\,, & \text{if}~\abs x > \ell_0\,,\; \abs y = \ell_0\,,\; \text{and}~x~\text{is a descendant of}~y\,. \end{cases}
\end{align*}
Clearly $\tilde q\ge \tilde p$.
Also, the number of queries we made is
\begin{align*}
    \sum_{j=0}^{\ell_0} 2^j
    &= 2^{\ell_0+1} - 1
    \lesssim \frac{2^\ell}{\ell}
    \lesssim \frac{N}{\log N}\,.
\end{align*}
Finally, we bound the ratio $Z_q/Z_p$. By definition,
\begin{align*}
    Z_q =\sum_{x\in \eu T} \tilde q(x)
    &= \sum_{x \in \eu T, \; \abs x \le \ell_0} \tilde p(x) + \sum_{x \in \eu T, \; \abs x > \ell_0} \tilde q(x)\,.
\end{align*}
For the second sum, we can write
\begin{align*}
    \sum_{x\in\eu T, \; \abs x > \ell_0} \tilde q(x)
    &= \sum_{y\in \eu T, \; \abs y = \ell_0} \tilde p(y) \, (2^{\ell-\ell_0+1} - 1)
    = \sum_{y\in \eu T, \; \abs y = \ell_0} \tilde p(y) \, (2^{\log_2\ell-c +1} - 1) \\
    &\le \ell \, 2^{-c+1} \sum_{y\in \eu T, \; \abs y = \ell_0} \tilde p(y)\,.
\end{align*}
On the other hand, if $x$ denotes any vertex, let $x_1$, $x_2$ denote its two children; then, for any level $j$,
\begin{align*}
    \sum_{x\in\eu T, \; \abs x = j+1} \tilde p(x)
    &= \sum_{x\in\eu T, \; \abs x = j} \{\tilde p(x_1) + \tilde p(x_2)\}
    \le \sum_{x\in\eu T, \; \abs x = j} \tilde p(x)\,.
\end{align*}
Hence,
\begin{align*}
    \sum_{x\in\eu T, \; \abs x \le \ell_0} \tilde p(x)
    &\ge (\ell_0 +1) \sum_{x\in\eu T, \; \abs x = \ell_0} \tilde p(x)
\end{align*}
which yields
\begin{align*}
    Z_q
    &\le \bigl(1 + \frac{\ell \, 2^{-c+1}}{\ell_0+1}\bigr) \sum_{x\in\eu T, \; \abs x \le \ell_0} \tilde p(x)
    \le 2 Z_p\,,
\end{align*}
if $\ell$ and $c$ are sufficiently large.
\end{proof}

\subsubsection{Lower bound}\label{scn:tree_lower}

The proof of the lower bound follows the strategy encapsulated in \eqref{eq:lower_bd_ineq}.
\begin{proof}
Suppose that an algorithm achieves rejection sampling ratio $2$ with $n$ queries.
Again let $\ell_0 := \ell-\log_2\ell + c$, where the constant $c$ will possibly be different from the one in the upper bound.
The reference distribution will be
\begin{align*}
    \tilde p(x)
    &:= \begin{cases} 2^{-\abs x}\,, & \abs x \le \ell_0\,, \\ 0\,, & \abs x > \ell_0\,. \end{cases}
\end{align*}
Note that $p\in \mc P$.
The normalizing constant is $Z_{p} = \ell_0 + 1$, since there are $2^j$ vertices at level $j$.
For each $\abs y > \ell_0$, we will create a perturbation distribution $p_y$ in the following way:
\begin{align*}
    \tilde p_y(x)
    := \begin{cases}2^{-\abs x} \,, & \abs x \le \ell_0\,, \\ 2^{-\ell_0}\,, & \abs x > \ell_0~\text{and}~y~\text{is a descendant of}~x~(\text{or equal to}~x)\,,\\
    0, & \text{otherwise}\,.\end{cases}
\end{align*}
Thus, $\tilde p_y$ places extra mass on the path leading to $y$; note also that $p_y \in \mc P$.
The normalizing constant for $p_y$ is
\begin{align*}
    Z_{p_y}
    &= Z_{p} + \sum_{j=\ell_0+1}^{\abs y} 2^{-\ell_0}
    \le \ell_0 + 1 + (\ell - \ell_0)\, 2^{-\ell_0}
    = \ell_0\, \{1+o(1)\}\,,
\end{align*}
where $o(1)$ tends to $0$ as $\ell\to\infty$.

Next, let $\mc Q$ denote the set of vertices $x$ at level $\ell_0$ for which at least one of the descendants of $x$ (not including $x$ itself) is queried by the algorithm, and let $\mc Q^\comp$ denote the vertices at level $\ell_0$ which do not belong to $\mc Q$.
Note if $x \in \mc Q^\comp$ and $y$ is a descendant of $x$, then $p_y$ is consistent with the queries made by the algorithm.
Let $\mc D(x)$ denote the descendants of $x$.
Now, applying~\eqref{eq:lower_bd_ineq} with $p^\star=p$,
\begin{align*}
    2
    &\ge \sum_{x \in \eu T, \; \abs x \le \ell_0} p(x) + \sum_{x\in \mc Q^\comp} \sum_{y\in \mc D(x)} p_y(y)
    = 1 + \sum_{x\in \mc Q^\comp} \sum_{y\in \mc D(x)} p_y(y)
\end{align*}
which yields
\begin{align*}
    1
    &\ge \sum_{x\in \mc Q^\comp} \sum_{y\in \mc D(x)} p_y(y)
    \ge \sum_{x\in \mc Q^\comp} \frac{2^{-\ell_0}}{\ell_0 \, (1+o(1))} \, (2^{\ell-\ell_0+1} - 2)
    \gtrsim \frac{2^{\ell-2\ell_0+1}}{\ell_0 \, (1+o(1))} \, \{2^{\ell_0} - \abs{\mc Q}\}\,.
\end{align*}
It then yields
\begin{align*}
    n
    &\ge \abs{\mc Q}
    \gtrsim 2^{\ell_0} - \frac{\ell_0 \, (1+o(1))}{2^{\ell-2\ell_0+1}}
    = 2^{\ell_0} \, \Bigl( 1 - \frac{\ell_0 \, (1+o(1))}{2^{\ell-\ell_0+1}}\Bigr) \\
    &= 2^{\ell_0} \, \Bigl( 1 - \frac{\ell_0 \, (1+o(1))}{2^{\log_2 \ell-c+1}}\Bigr)
    = 2^{\ell_0} \, \Bigl( 1 - \frac{\ell_0 \, (1+o(1))}{\ell \, 2^{-c+1}}\Bigr)\,.
\end{align*}
If we now choose $c \ll 0$ to be a \emph{negative} constant, we can verify
\begin{align*}
    n
    &\gtrsim 2^{\ell_0}
    = 2^{\ell - \log_2\ell +c}
    \gtrsim \frac{N}{\log N}\,,
\end{align*}
completing the proof.
\end{proof}

\subsubsection{An alternate definition of monotone}

In this section, we show that if we adopt an alternative definition of monotone on a binary tree, then the rejection sampling complexity is trivial.

\begin{theorem}\label{thm:alternate_monotone}
Let $\mc P$ be the class of probability distributions $p$ on a binary tree with $N$ vertices, with maximum depth $\lceil \log_2(N+1)\rceil$, such that if for every non-leaf vertex $x$, if the children of $x$ are $x_1$ and $x_2$, then $p(x) \ge p(x_1) \vee p(x_2)$. Then, the rejection sampling complexity of $\mc P$ is $\Theta(N)$.
\end{theorem}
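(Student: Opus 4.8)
The plan is to prove a matching upper and lower bound, both equal to $\Theta(N)$; since $N$ is always a trivial upper bound (query every vertex), the entire content is the lower bound $n = \Omega(N)$. I will follow the generic lower bound strategy of~\eqref{eq:lower_bd_ineq}: pick a reference distribution $p^\star \in \mc P$, and for each vertex $y$ build an adversarial perturbation $p_y \in \mc P$ consistent with the queried values but placing as much mass as possible on $y$, then sum over $y$. The key structural difference from~\autoref{defn:tree} is that the constraint $p(x) \ge p(x_1) \vee p(x_2)$ is far weaker than $p(x) \ge p(x_1) + p(x_2)$: the former only forces the mass along a root-to-leaf path to be non-increasing, and crucially does \emph{not} force the total mass to decay geometrically with depth. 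This is exactly what allows an adversary to hide a large amount of probability mass at a single unqueried leaf.

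The natural choice is to take the reference distribution $p^\star$ close to uniform on the tree (or uniform on the leaves), and for each leaf $y$ let $p_y$ concentrate mass on $y$ subject to monotonicity. Concretely, I would try $p_y$ proportional to the distribution that puts a large constant on the entire root-to-$y$ path and a tiny amount elsewhere; monotonicity along the path is automatic because the path is nested, and the condition with $\vee$ places no coupling constraint across siblings, so $p_y \in \mc P$. For a leaf $y$ whose ancestors were all \emph{not} distinguished by the queries, $p_y$ can place $\Omega(1)$ mass on $y$ while remaining consistent with $p^\star$ at every query point. Substituting into~\eqref{eq:lower_bd_ineq} gives
\begin{align*}
    2 \ge \sum_{y} p_y(y) \gtrsim \abs{\{\text{leaves } y : p_y \text{ consistent with the queries}\}}\,,
\end{align*}
so the number of leaves the adversary cannot rule out must be $O(1)$.

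The core counting step is then to argue that each of the $n$ queries can only ``rule out'' a bounded number of leaves, or more precisely that unless a constant fraction of the $\approx N/2$ leaves are themselves queried, there remain many indistinguishable leaves, forcing $n \gtrsim N$. The cleanest way I would carry this out is to observe that a query at vertex $x$ only constrains $p_y$ for leaves $y$ in the subtree rooted at $x$ through the monotonicity inequalities along that one path; a leaf $y$ is only ``killed'' as a hiding spot if $y$ itself (or a near-ancestor carrying comparable required mass) is queried. Since there are $\Theta(N)$ leaves and each query eliminates only $O(1)$ of them as feasible high-mass targets, the inequality above forces $n = \Omega(N)$.

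The main obstacle will be making the ``each query eliminates $O(1)$ leaves'' claim precise, because a single query high in the tree could in principle constrain many descendant leaves simultaneously. The resolution is that the $\vee$-monotonicity constraint is purely local along paths and imposes \emph{no} budget on the total mass: querying an ancestor $x$ fixes $\tilde p(x)$ but does not bound the sum of $p_y$ over the many leaves below $x$, so it cannot simultaneously suppress the mass at all those leaves. I would make this rigorous by choosing the perturbations so that $p_y$ agrees with $p^\star$ at \emph{every} internal vertex and deviates only at the leaf $y$ (pushing mass onto $y$ up to the value already forced at its parent), so that only a query exactly at $y$ can detect the perturbation; then the feasible hiding leaves are precisely the unqueried leaves, of which there are at least $N/2 - n$, and~\eqref{eq:lower_bd_ineq} yields $n = \Omega(N)$.
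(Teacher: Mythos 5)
Your overall strategy coincides with the paper's: choose a reference $p^\star$ vanishing near the leaves, build perturbations $p_y$ that hide mass at an unqueried leaf $y$, and feed them into~\eqref{eq:lower_bd_ineq}; the structural observation that the $\vee$-constraint only enforces monotonicity along root-to-leaf paths, with no budget coupling siblings, is exactly the right one. However, neither of your concrete constructions works, and the second gap is quantitative rather than cosmetic. The first construction ($p_y$ equal to a large constant on the root-to-$y$ path and tiny elsewhere, against a near-uniform $p^\star$) is not consistent with the queries: it alters the ratio $\tilde p(x)/\tilde p(x')$ between on-path and off-path internal vertices, so a single query off the path to $y$ already rules $p_y$ out, and essentially no leaf survives as a hiding spot. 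Your proposed rigorous version --- let $p_y$ agree with $p^\star$ at every internal vertex and raise only the leaf $y$ up to its parent's value --- is consistent with the queries but provably too weak. Writing $S_j := \sum_{\abs{x}=j} p^\star(x)$ for the level sums, the constraint $p^\star(x) \ge p^\star(x_1) \vee p^\star(x_2) \ge \frac12\,(p^\star(x_1)+p^\star(x_2))$ forces $S_j \ge S_{j+1}/2$, hence $S_{\ell-1} \le \frac12\,(1+O(2^{-\ell}))$ for any admissible $p^\star$ vanishing on the leaves. Since a single-vertex perturbation places at most $\tilde p^\star(\mathrm{parent}(y))$ of unnormalized mass at $y$ (and can only increase the normalizing constant), the total hideable mass satisfies $\sum_{y\,\mathrm{leaf}} p_y(y) \le 2S_{\ell-1} \le 1 + O(2^{-\ell})$. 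Plugging this into~\eqref{eq:lower_bd_ineq} gives $2 \ge 1 + (1+O(2^{-\ell})) - n\cdot O(2^{-\ell})$, which is satisfied already for $n = O(1)$: no lower bound follows, for any choice of reference.

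The repair is to deviate on a short suffix of the path rather than at the leaf alone. Take $\tilde p^\star \equiv 1$ on depths $\le \ell-2$ and $0$ on the bottom two levels, and for each leaf $y$ let $\tilde p_y$ additionally equal $1$ at $y$ \emph{and} at its depth-$(\ell-1)$ ancestor. Each perturbation now hides mass $\approx 2\cdot 2^{-(\ell-1)}$ at $y$, the sum over all $2^\ell$ leaves is $\approx 4 > 1$, and~\eqref{eq:lower_bd_ineq} forces a constant fraction of the bottom levels to be queried, i.e.\ $n = \Omega(N)$; this is exactly the paper's construction. Your accounting of which leaves are ``killed'' then needs a matching small adjustment: a query at a depth-$(\ell-1)$ vertex detects the perturbations of both of its children, so each query eliminates at most two leaves --- still yielding $n \gtrsim N$.
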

\begin{proof}
    It suffices to show the lower bound, and the proof will be similar to the one in Appendix~\ref{scn:tree_lower}. We may assume that the binary tree is a complete binary tree with depth $\ell$.
    Suppose that an algorithm achieves a rejection sampling ratio $2$ after $n$ queries. We define the reference distribution $p^\star$ via
    \begin{align*}
        \tilde p^\star(x)
        &:= \begin{cases} 1\,, & \abs x \le \ell - 2\,, \\ 0\,, & \abs x > \ell - 2\,. \end{cases}
    \end{align*}
    The normalizing constant is $Z_{p^\star} = 2^{\ell-1}-1$. For each leaf vertex $y$, we define the perturbation distribution $p_y$ via
    \begin{align*}
        \tilde p_y(x)
        &:= \begin{cases} 1\,, & \abs x \le \ell - 2~\text{or}~x~\text{is an ancestor of}~y \; (\text{including if}~x=y)\,, \\ 0\,, & \abs x > \ell - 2\,. \end{cases}
    \end{align*}
    The normalizing constant of $p_y$ is $Z_{p_y} = 2^{\ell-1}+1$.
    
    Let $\mc Q$ denote the set of leaf vertices which are queried by the algorithm, and let $\mc Q^\comp$ denote the set of leaf vertices not in $\mc Q$. Then, from~\eqref{eq:lower_bd_ineq},
    \begin{align*}
        2
        &\ge \sum_{x\in\eu T, \; \abs x \le \ell-2} p^\star(x) + \sum_{y \in \mc Q^\comp} p_y(y)
        = 1 + \sum_{y \in \mc Q^\comp} p_y(y)
    \end{align*}
    and rearranging this yields
    \begin{align*}
        1
        &\ge \sum_{y \in \mc Q^\comp} \frac{1}{2^{\ell-1}+1}
        = \frac{1}{2^{\ell-1}+1} \, \{2^\ell - \abs{\mc Q}\}\,.
    \end{align*}
    This is further rearranged to yield
    \begin{align*}
        n
        &\ge \abs{\mc Q}
        \ge 2^{\ell-1} \, \bigl(2 - 1 - \frac{1}{2^{\ell-1}}\bigr)
        \gtrsim 2^\ell = N\,,
    \end{align*}
    where the last inequality holds if $\ell > 1$.
\end{proof}

\end{document}